\documentclass{article} 

\usepackage{iclr2024_conference,times}


\usepackage{amsmath,amsfonts,bm}









\def\eqref#1{equation~\ref{#1}}









\def\1{\bm{1}}










\DeclareMathAlphabet{\mathsfit}{\encodingdefault}{\sfdefault}{m}{sl}
\SetMathAlphabet{\mathsfit}{bold}{\encodingdefault}{\sfdefault}{bx}{n}













\usepackage[colorlinks=true,citecolor=teal]{hyperref}
\usepackage{url}
\usepackage{booktabs}       
\usepackage{amsfonts}       
\usepackage{nicefrac}       
\usepackage{microtype}      
\usepackage{xcolor}         
\usepackage{cancel}

\usepackage{graphicx}
\usepackage{subfigure}
\usepackage{algorithm} 
\usepackage{algpseudocode}
\usepackage{pifont}
\usepackage{rotating}
\usepackage{float}

\usepackage{colortbl}  
\usepackage{xcolor}
\usepackage{array}   

\usepackage{wrapfig}
\usepackage{caption}
\usepackage{amsmath}
\usepackage{color}

\usepackage{listings}
\usepackage{xcolor}

\usepackage{multirow}
\usepackage[frozencache,cachedir=.]{minted}
\usepackage{bbm, dsfont}

\usepackage{amsmath,amssymb,amsthm}
\newtheorem{theorem}{Theorem}[section]

\usepackage{titletoc}
\newcommand\DoToC{%
  \startcontents
  \printcontents{}{1}{\hrulefill\vskip0pt}
  \vskip0pt \noindent\hrulefill
}

\title{A hard-to-beat baseline for training-free CLIP-based adaptation}


\author{Zhengbo Wang$^{1,2}$\quad Jian Liang$^{2,3}$\thanks{Corresponding author.}\quad Lijun Sheng$^{1,2}$\quad Ran He$^{2,3}$\quad Zilei Wang$^{1}$\quad Tieniu Tan$^{2,4}$ \\
\\
$^1$ University of Science and Technology of China \\
$^2$ CRIPAC \& MAIS, Institute of Automation, Chinese Academy of Sciences (CASIA)\\
$^3$ School of Artificial Intelligence, University of Chinese Academy of Sciences $^4$ Nanjing University \\
\texttt{zhengbowang@mail.ustc.edu.cn, liangjian92@gmail.com} \\
}

%

\iclrfinalcopy 
\begin{document}

\maketitle

\begin{abstract}
Contrastive Language-Image Pretraining (CLIP) has gained popularity for its remarkable zero-shot capacity.
Recent research has focused on developing efficient fine-tuning methods, such as prompt learning and adapter, to enhance CLIP's performance in downstream tasks.
However, these methods still require additional training time and computational resources, which is undesirable for devices with limited resources.
In this paper, we revisit a classical algorithm, Gaussian Discriminant Analysis (GDA), and apply it to the downstream classification of CLIP.
Typically, GDA assumes that features of each class follow Gaussian distributions with identical covariance.
By leveraging Bayes' formula, the classifier can be expressed in terms of the class means and covariance, which can be estimated from the data without the need for training.
To integrate knowledge from both visual and textual modalities, we ensemble it with the original zero-shot classifier within CLIP.
Extensive results on 17 datasets validate that our method surpasses or achieves comparable results with state-of-the-art methods on few-shot classification, imbalanced learning, and out-of-distribution generalization.
In addition, we extend our method to base-to-new generalization and unsupervised learning, once again demonstrating its superiority over competing approaches.
Our code is publicly available at \url{https://github.com/mrflogs/ICLR24}.
\end{abstract}

\section{Introduction}
Contrastive Language-Image Pretraining, known as CLIP~\citep{radford2021learning}, has attracted considerable attention in recent years as a powerful method for aligning vision and language representations.
By leveraging a massive dataset of 400 million web-scale image-text pairs, CLIP learns to encode images and text into a shared semantic space using vision and language encoders, respectively.
This shared semantic space facilitates the comparison of similarities and differences between images and texts.
One remarkable feature of CLIP is its zero-shot ability to perform image classification without any additional training on the target classes.
This is accomplished by using the language encoder to generate classifier weights based on a simple prompt, such as ``a photo of a \{class\}".
By inputting different class names into the prompt,
we can generate the corresponding weights for classification, thereby enabling the classification of images into a broad range of categories without training.

Despite its powerful zero-shot capability, recent works~\citep{zhou2022learning, zhang2022tip, zhou2022conditional, lu2022prompt, gao2021clip} have focused on designing efficient fine-tuning methods for downstream classification tasks. 
These methods have achieved significant improvements compared to Zero-Shot CLIP~\citep{radford2021learning}, even when trained on few-shot datasets and optimizing extremely small numbers of parameters.
For instance, CoOp~\citep{zhou2022learning} proposes to learn a set of global text prompts for the pre-trained CLIP~\citep{radford2021learning} on downstream tasks, which achieves a 15\% improvement compared to Zero-Shot CLIP~\citep{radford2021learning} with only 16 samples per class by fine-tuning a mere 16k parameters. 
While these methods are efficient and yield satisfactory results in downstream tasks, they demand extra computational resources to learn new parameters. 
This can be inconvenient, especially on devices with limited resources. 

We aim to develop a method that not only eliminates the need for additional training, similar to Zero-Shot CLIP but also attains comparable or even better results than these training-required methods.
To achieve this, we revisit a classical algorithm, Gaussian Discriminant Analysis (GDA)~\citep{bishop2006pattern}, and apply it to the downstream classification tasks of CLIP.
Typically, GDA assumes that features from different classes follow Gaussian distributions with identical covariance.
By leveraging Bayes' formula, the classification probability, $p(y|x)$, can be expressed as a softmax of a linear function, where the weight and bias are determined by the mean vectors and covariance.
Based on this, we can compute the mean vectors and covariance from the training dataset to construct the classifier, thereby eliminating the need for any additional training process such as SGD.
To fully utilize the knowledge in pretrained CLIP, we ensemble it with the CLIP zero-shot classifier, integrating the knowledge from both visual and textual modalities. 

To further demonstrate its efficacy, we develop two variants of our method for base-to-new generalization and unsupervised learning, where our method can not be directly applied.
In the base-to-new generalization, the model is trained on a base dataset and then adapted to a new dataset with different classes.
Based on the observation that similar samples have similar statistical information~\citep{yangfree}, we propose using the K-Nearest-Neighbor algorithm to synthesize data for new classes, and we obtain the classifier for new classes using these synthesized data.
For unsupervised learning, where no labeled data are available for training, the mean and covariance cannot be directly obtained from data.
Based on the Gaussian assumption in GDA, the unlabeled data then follow a Gaussian mixture distribution.
We straightforwardly employ the EM algorithm to estimate its mean and covariance.
In both scenarios, we endeavor to keep the modifications relatively straightforward to avoid introducing additional complexity that might impact the assessment of our approach.
Nevertheless, these two simple variants still achieve comparable performance with previous complex state-of-the-art methods.

We conduct extensive experiments on 17 widely adopted datasets to evaluate the effectiveness of our method. 
Despite the simplicity of our approach, we demonstrate that our method serves as a hard-to-beat baseline.
In few-shot classification, our method surpasses state-of-the-art training-free methods by a margin of 2.82\% on average over 11 datasets, and it achieves comparable performance with training-required methods (76.05\% vs. 75.83\%). 
For imbalanced learning, our method outperforms previous state-of-the-art methods, even if they are fully fine-tuned.
The two variants of base-to-new generalization and unsupervised learning achieve comparable performance with previous methods.
These results underscore the effectiveness of our method.
\section{Related Work}
\textbf{Vision-Language Models.} In recent years, vision-language models (VLMs) have become a new paradigm for foundational models that aim to bridge the gap between the modalities of vision and language.
These models are trained on large-scale image-text datasets, which endows them with powerful transferable abilities such as zero-shot learning, few-shot adaptation, and in-context learning~\citep{radford2021learning, kim2021vilt, lu2019vilbert, su2019vl, jia2021scaling}.
Moreover, they exhibit strong open-world capabilities and have been successfully applied to recognize open-world concepts, including zero-shot learning~\citep{radford2021learning, jia2021scaling}, open-world segmentation~\citep{xu2021, ding2022decoupling}, and open-world detection~\citep{joseph2021towards, gu2021open, gupta2022ow}.
Contrastive-based vision-language pre-training has become the mainstream approach in this field.
These methods, including CLIP~\citep{radford2021learning} and ALIGN~\citep{jia2021scaling}, are trained on large-scale web-based noisy image-text pairs. 
They employ a language encoder and a vision encoder to encode the texts and images, respectively, and learn to align their representations through contrastive loss. 
We utilize CLIP~\citep{radford2021learning} in this work.

\textbf{Efficient Fine-tuning for VLMs.}
Recent works~\citep{zhou2022conditional, zhou2022learning, zhang2022tip, gao2021clip, lu2022prompt, chen2022prompt, guo2022calip, udandarao2022sus, huang2022unsupervised, wang2023exploring, wang2023improving} focus on developing efficient fine-tuning methods for large pre-trained vision-language models that can be used in downstream tasks due to their large model size.
These methods aim to achieve the maximum performance gain by fine-tuning the minimum number of model parameters on few-shot downstream datasets.
For instance, CoOp~\citep{zhou2022learning} proposes to learn global text prompts for downstream tasks through back-propagation on few-shot datasets.
Meanwhile, CLIP-Adapter~\citep{gao2021clip} proposes to learn a visual and a textual adapter to refine the original representations of the vision-language models.
Despite being efficient and achieving significant improvements, previous works~\citep{zhou2022conditional, wang2023improving} have found that these methods tend to exhibit poor generalization when confronted with new classes.
In response to this limitation, CoCoOp~\citep{zhou2022conditional} proposes a solution by incorporating visual information into the text prompt for regularization, leading to improved base-to-new generalization performance. 
Besides, other works~\citep{wang2023exploring, huang2022unsupervised, shu2022tpt} attempt to adapt CLIP~\citep{radford2021learning} for imbalanced learning and unsupervised learning scenarios.
While these approaches have achieved satisfactory results, they still require additional training processes to learn the newly introduced parameters, which is undesirable for devices with limited resources.

\textbf{High-dimensional Covariance Estimation.}
Our method involves estimating the covariance or precision matrices in high-dimensional space, which can be challenging, particularly when data is limited.
Typically, the covariance matrix is estimated using the Maximum Likelihood Estimator~(MLE), but this is not a reliable estimator of the eigenvalues of the covariance matrix. 
As a result, the precision matrix obtained from the inversion of the estimated covariance matrix may not be accurate. 
In some cases, it may even be impossible to invert the empirical covariance matrix due to numerical issues. 
To address this problem, previous works have proposed shrinkage methods~\citep{ledoit2004well, chen2010shrinkage, kubokawa2008estimation, efron1976multivariate}.
For instance, 
Ledoit-Wolf shrinkage~\citep{ledoit2004well} provides a formula to calculate the optimal shrinkage coefficient that minimizes the MSE between the estimated and actual covariance matrices.
Similarly, OAS~\citep{chen2010shrinkage} presents a formula that aims to select a shrinkage coefficient that results in a lower MSE than the one given by Ledoit-Wolf shrinkage~\citep{ledoit2004well}.
However, these approaches require additional optimization processes to estimate the covariance and precision matrices. In our work, we use the empirical Bayes ridge-type estimator~\citep{kubokawa2008estimation}, which does not require any training, to address this challenge.

\section{Method}
\subsection{Gaussian Discriminant Analysis for CLIP Adaptation}
\textbf{Gaussian Discriminant Analysis.}
We revisit a traditional probabilistic generative model~\citep{bishop2006pattern}, Gaussian Discriminant Analysis (GDA), for training-free CLIP-based adaptation, whose classifier can be derived by making an assumption about the data distribution of each class.
The parameters of the classifier can be obtained from the statistical information of the data without the need for explicit training.
By applying Bayes' formula, the classification probability can be expressed as the function of the data distribution and its prior probability:
\begin{equation}
\label{eq:bayes}
    p(y=i|x) = \frac{p(x|y=i)p(y=i)}{\sum_{j=1}^Kp(x|y=j)p(y=j)} = \frac{\exp(f_i(x))}{\sum_{j=1}^K\exp(f_j(x))},
\end{equation}
where $i=1,2,\dots,K$ for $K$-class classification tasks, $x\in\mathbb{R}^D$ is the visual feature, and we normalize Eq.~(\ref{eq:bayes}) using the softmax function. 
And the logit function is $f_i(x) = \log (p(x|y=i)p(y=i)), i=1,2,\dots,K$.
Therefore, by assuming the data distribution of each class and their prior distribution, we can obtain the classifier.
In GDA~\citep{bishop2006pattern}, the features are typically assumed to follow the Gaussian distributions with identical covariance, i.e., $(X|Y=i)\sim \mathcal{N}(\mu_i, \Sigma)$ for $i=1,2,..,K$.
We substitute this assumption into Eq.~(\ref{eq:bayes}), which then can be expressed as follows:
\begin{equation}
    \label{eq:softmax}
    p(y=i|x) = \frac{\exp(\mu_i^T\Sigma^{-1}x - \frac{1}{2}\mu_i^T\Sigma^{-1}\mu_i + \log p_i)}{\sum_{j=1}^K\exp(\mu_j^T\Sigma^{-1}x - \frac{1}{2}\mu_j^T\Sigma^{-1}\mu_j + \log p_j)},
\end{equation}
where $p_i=p(y=i)=1/K, i=1,2,...,K$ is the prior probability of the corresponding class, which is assumed to be uniform.
And thus, the logit $f_i(x) = \mu_i^T\Sigma^{-1}x - \frac{1}{2}\mu_i^T\Sigma^{-1}\mu_i + \log p_i$.
Thus, the weight $W\in\mathbb{R}^{K\times D}$ and the bias $b\in\mathbb{R}^{K}$ for the classifier are as follows:
\begin{equation}
    \label{eq:solution}
    w_i = \Sigma^{-1}\mu_i, \quad
    b_i = \log p_i - \frac{1}{2}\mu_i^T\Sigma^{-1}\mu_i,
\end{equation}
for $i=1,2,...,K$.
Later, we estimate the mean for each class and the precision matrix using the training data and subsequently obtain the corresponding weight and bias for the linear classifier.

\begin{figure}[t]
    \centering
    \includegraphics[width=1.0\textwidth]{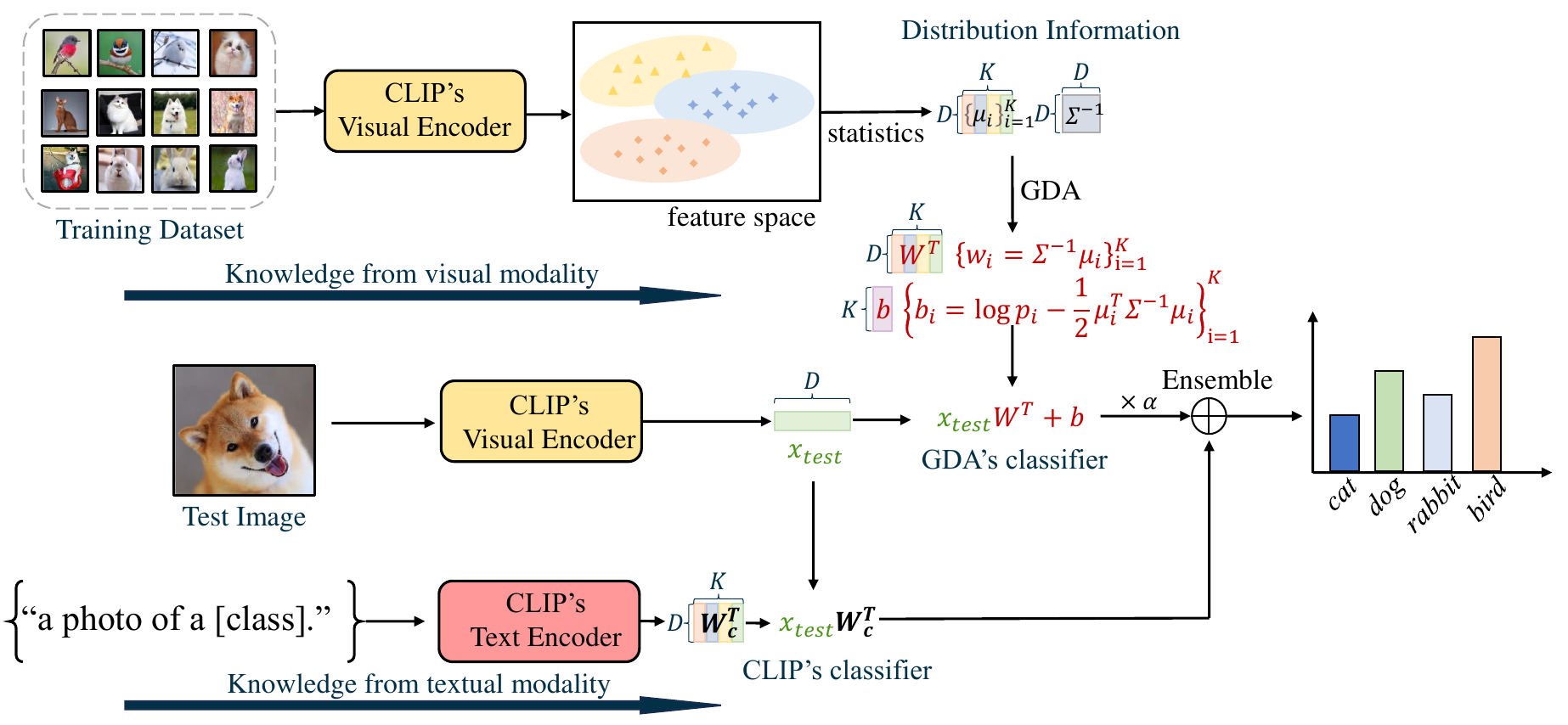}
    \caption{
    \textbf{The overview of our training-free method.}
    In our method, we begin by extracting visual features from the training dataset using the CLIP visual encoder. 
    Next, we compute the mean vectors for each class and the shared precision matrix (inverse covariance) using Eq.~(\ref{eq:precisionmatrix}).
    Through the Gaussian Discriminate Analysis (GDA), the weight and bias of the classifier can be expressed in terms of the mean vectors and the precision matrix, which can be derived from Eq.~(\ref{eq:solution}) (\textcolor{red}{the red formula in the figure}).
    Finally, we enhance our method by ensembling the GDA classifier and the CLIP's zero-shot classifier, integrating the knowledge from visual and textual modalities.
  }
    \label{fig:architecture}
    \vspace{-15pt}
\end{figure}

\textbf{Parameter Estimation.}
We estimate the mean vectors using the empirical mean $\hat{\mu}_k = \sum_{j=1}^N\mathbb{I}_{(y_j=k)}x_j / \sum_{j=1}^N\mathbb{I}_{(y_j=k)}$. 
However, in high-dimensional spaces, estimating the precision matrix is a challenging task due to the inverse of the empirical covariance matrix being a biased estimator of the precision matrix, and it may be impossible to invert due to numerical issues. 
To solve this, we utilize shrinkage methods to estimate the precision matrix.
To avoid introducing additional computations, we use the empirical Bayes ridge-type estimator~\citep{kubokawa2008estimation} to estimate the precision matrix:
\begin{equation}
\label{eq:precisionmatrix}
    \widehat{\Sigma^{-1}} = D((N-1)\hat{\Sigma} + tr(\hat{\Sigma})I_D)^{-1},
\end{equation}
where $N$ is the number of samples, $D$ is the dimension of the features, and $\hat{\Sigma}$ represents the empirical covariance. 
Once the parameter estimation is completed, we can input it into Eq.~(\ref{eq:solution}) to obtain the weight and bias of the classifier.

Besides the knowledge extracted from visual modality, the prior knowledge of text modality in pre-trained CLIP is calculated by $x_{test}W_c^T$, where $W_c$ is the weights of CLIP's classifier generated from the text encoder by inputting a predefined prompt, such as ``a photo of a \{class\}".
For simplicity, we integrate the knowledge from visual and text modalities by mixing the predictions.
Therefore, the output logits of the test image are then calculated as:
\begin{equation}
    \label{eq:ensemble}
    logits = x_{test}W_c^T + \alpha (x_{test}W^T + b),
\end{equation}
where $x_{test}$ is the visual feature of test image, and $\alpha$ is a hyper-parameter.

\subsection{Extension to other scenarios}
We further extend our method to base-to-new generalization and unsupervised learning, where our method cannot directly apply, to illustrate the generalization of our method.
In order to maintain the simplicity of the method and avoid introducing additional complexities that could impact the assessment of our approach, we only perform straightforward modifications in these two scenarios.

\textbf{Extension to Base-to-new Generalization.}
For CLIP base-to-new generalization, the model is trained on the base dataset and tested on a new dataset with unseen classes.
However, our method cannot be directly implemented in the scenario where data for the new classes is unavailable.
Based on the observation that similar samples have similar statistical information~\citep{yangfree}, we propose that our method can be extended to new classes using the KNN algorithm.
To achieve this, we utilize text embeddings of the new classes to query the training set and select the k nearest neighbors as the synthesized labeled data. 
The process is defined as follows:
\begin{equation}
    \Tilde{\mathcal{D}}_{new} = \bigcup_{i=K+1}^M\{(x, i)|x\in NN^k(t_i, \mathcal{D})\}
\end{equation}
where $i=K+1, ..., M$ denotes the new classes, $t_i$ denotes its text embedding, and ${NN}^k(*,\mathcal{D})$ denotes the k-nearest neighbors of training set $\mathcal{D}$.
The classifier is then produced utilizing Eq.~(\ref{eq:solution}).

\textbf{Extension to Unsupervised Learning.}
In the unsupervised learning scenario, we only have the unlabeled data $\{x_i\}_{i=1}^N$.
Based on the Gaussian assumption in GDA, the unsupervised data $\{x_i\}_{i=1}^N$ follow Gaussian mixture distribution.
In order to maintain the simplicity of our method, we directly employ the EM algorithm for estimating the means and covariance matrix.
To begin, we initialize the mean vectors and covariance using the zero-shot classifier, assuming equal priors for each Gaussian distribution.
In the E-step, we calculate the probability of the unlabeled data $\{x_i\}_{i=1}^N$ as follows:
\begin{equation}
    \gamma_{ik} = \frac{\exp(f_k(x))}{\Sigma_{j=1}^K\exp{(f_j(x_i))}},
\end{equation}
for the unlabeled data $\{x_i\}_{i=1}^N$, and $f$ is the logit function using Eq.~(\ref{eq:ensemble}).
Moving on to the M-step, we update the mean vectors and covariance matrix using the following formulas:
\begin{equation}
    \mu_k = \frac{\sum_{i=1}^N\gamma_{ik}x_i}{\sum_{i=1}^N\gamma_{ik}}, \quad
    \Sigma =\frac{1}{K}\sum_{k=1}^K\frac{\sum_{i=1}^N\gamma_{ik}(x_i - \mu_k)(x_i - \mu_k)^T}{\sum_{i=1}^N\gamma_{ik}}.
\end{equation}
Subsequently, we update the classifier using Eq.~(\ref{eq:solution}) and repeat the EM process until convergence.
\section{Experiments}
\subsection{Setup}
\label{subsec:setup}
\textbf{Dataset.}
According to previous works~\citep{radford2021learning,zhou2022conditional, zhou2022learning, zhang2022tip}, we select 11 publicly available image classification datasets to assess the effectiveness of CLIP few-shot classification, base-to-new generalization, and unsupervised learning.
These datasets cover a range of image recognition tasks, including generic object recognition with ImageNet~\citep{deng2009imagenet} and Caltech101~\citep{fei2004learning}, fine-grained image recognition with OxfordPets~\citep{parkhi2012cats}, StanfordCars~\citep{krause20133d}, Flowers102~\citep{nilsback2008automated}, Food101~\citep{bossard2014food} and FGVCAircraft~\citep{maji2013fine}, satellite image classification with EuroSAT~\citep{helber2019eurosat}, action classification with UCF101~\citep{soomro2012ucf101}, texture classification with DTD~\citep{cimpoi2014describing}, and scene recognition with SUN397~\citep{xiao2010sun}.  
Additionally, we also select 4 datasets, ImageNetV2~\citep{recht2019imagenet}, ImageNet-Sketch~\citep{wang2019learning}, ImageNet-A~\citep{hendrycks2021natural}, and ImageNet-R~\citep{hendrycks2021many}, to evaluate the out-of-distribution generalization.
Moreover, we adopt 2 commonly used imbalanced datasets, ImageNet-LT~\citep{liu2019large} and Places-LT~\citep{zhou2017places}, for CLIP long-tailed classification. 

\textbf{Training Details.}
To align with previous works~\citep{zhou2022learning,zhou2022conditional,zhang2022tip}, 
we utilize ResNet-50~\citep{he2016deep} as the visual encoder of CLIP for few-shot classification by default.
Similarly, following the previous work~\citep{wang2023exploring}, we choose ResNet-101 as the visual encoder of CLIP for imbalanced learning.
To evaluate the model's base-to-new generalization and out-of-distribution generalization performance, we followed CoCoOp~\citep{zhou2022conditional} and adopted ViT-B/16-based CLIP~\citep{radford2021learning}.
We follow CLIP~\citep{radford2021learning} to adopt prompt ensembling on ImageNet and use a single Zero-Shot CLIP on the other 10 datasets.
The hyperparameter $\alpha$, which is used to ensemble the classifiers, is searched in the validation set with values ranging from $0.0001$ to $100.0$, and this value is kept constant for new class data.
And the $k$ for the KNN algorithm to synthesize the new class dataset is set to 64.
All experiments are conducted on a single NVIDIA GeForce RTX 3090.
To obtain a reliable estimate of model performance, we conduct three runs with different random seeds and averaged the results.

\textbf{Evaluation Protocol.}
For the few-shot classification, we adhere to the evaluation protocol proposed by CLIP~\citep{radford2021learning}.
Specifically, we randomly select 1, 2, 4, 8, or 16 instances per class to form the few-shot datasets. 
Subsequently, we train our model on the few-shot datasets and evaluate its performance on the full test dataset.
For base-to-new generalization, we adopt the standard protocol proposed by CoCoOp~\citep{zhou2022conditional}.
On each dataset, we split the classes equally into two groups, one as base classes and the other as the new classes.
The method is trained using only the 16-shot base classes while the evaluation is conducted on base and new classes separately to test generalization ability. 
Regarding unsupervised learning, we adhere to the evaluation protocol described by UPL~\citep{huang2022unsupervised}.
For imbalanced learning, we split the classes in each benchmark into three groups based on the number of available images per class. 
These groups are referred to as Many-shot (with more than 100 images), Medium-shot (with 20 to 100 images), and Few-shot (with less than 20 images).
We report the accuracy of each group and the macro F1 score.
\vspace{-5pt}
\subsection{Results on Few-Shot Classification}
\vspace{-5pt}
\begin{figure}[!htbp]
    \centering
    \begin{minipage}{0.24\textwidth}
        \centering
        \includegraphics[width=1.13\textwidth]{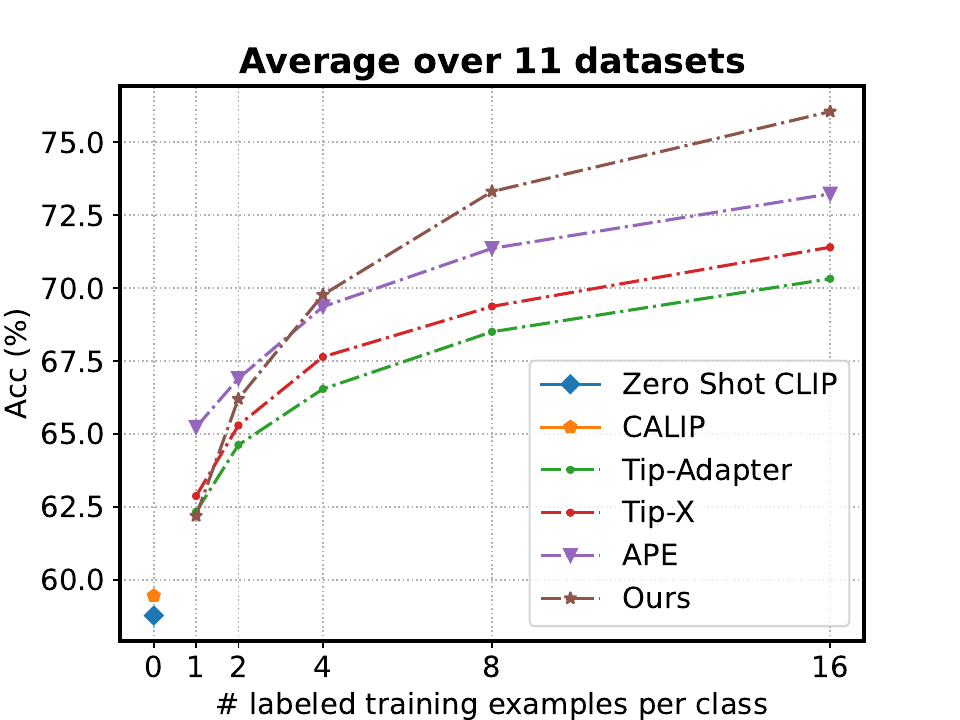}
    \end{minipage}
    \begin{minipage}{0.24\textwidth}
        \centering
        \includegraphics[width=1.13\textwidth]{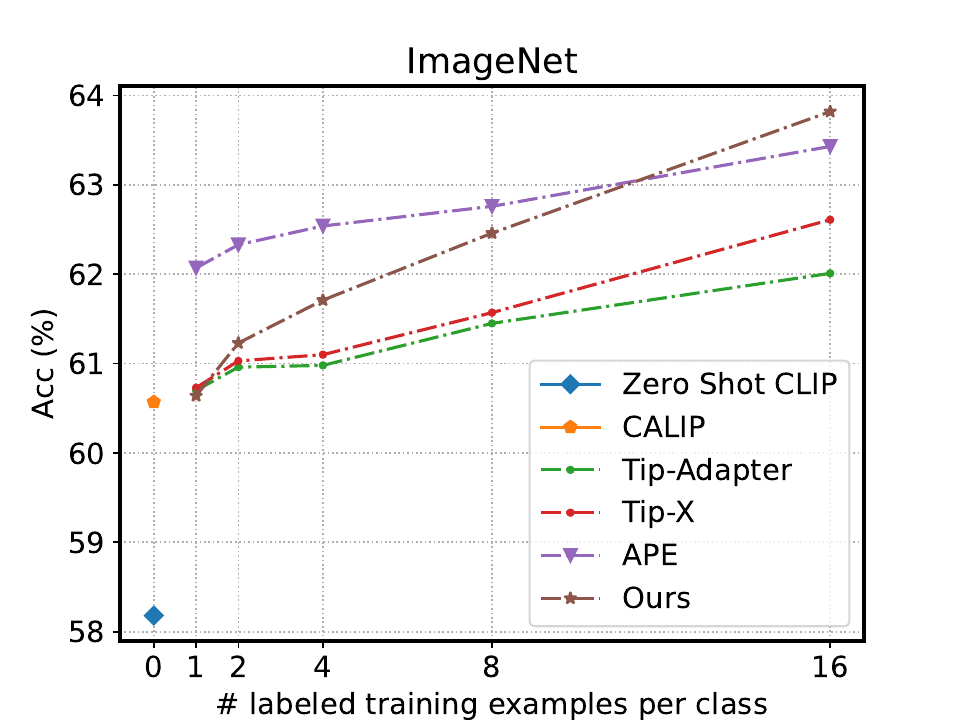}
    \end{minipage}
    \begin{minipage}{0.24\textwidth}
        \centering
        \includegraphics[width=1.13\textwidth]{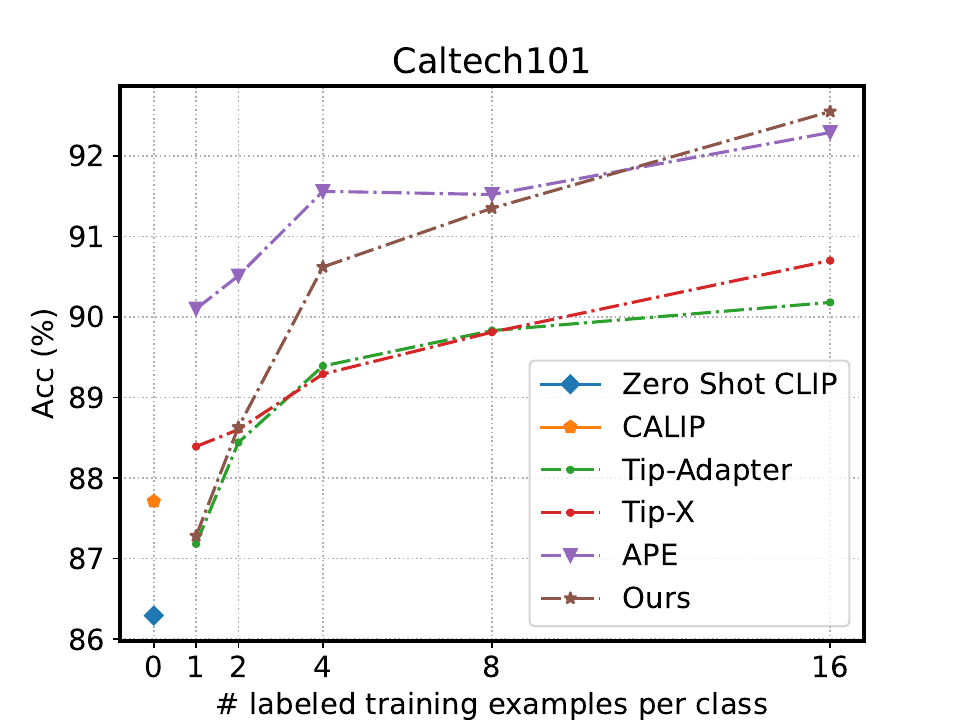}
    \end{minipage}
    \begin{minipage}{0.24\textwidth}
        \centering
        \includegraphics[width=1.13\textwidth]{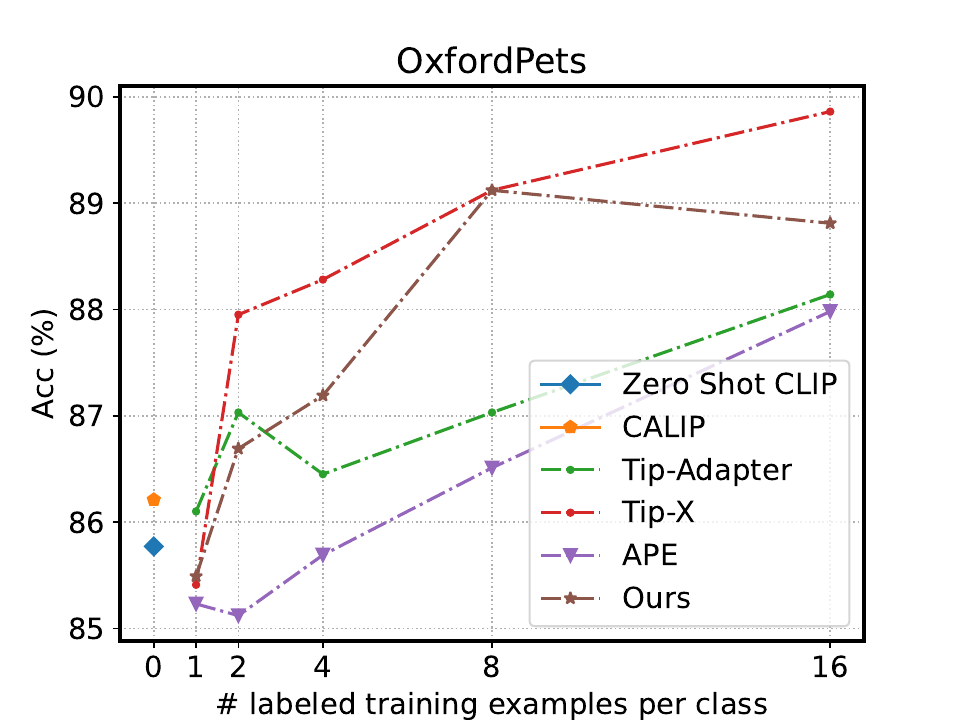}
    \end{minipage}
    \\
    \begin{minipage}{0.24\textwidth}
        \centering
        \includegraphics[width=1.13\textwidth]{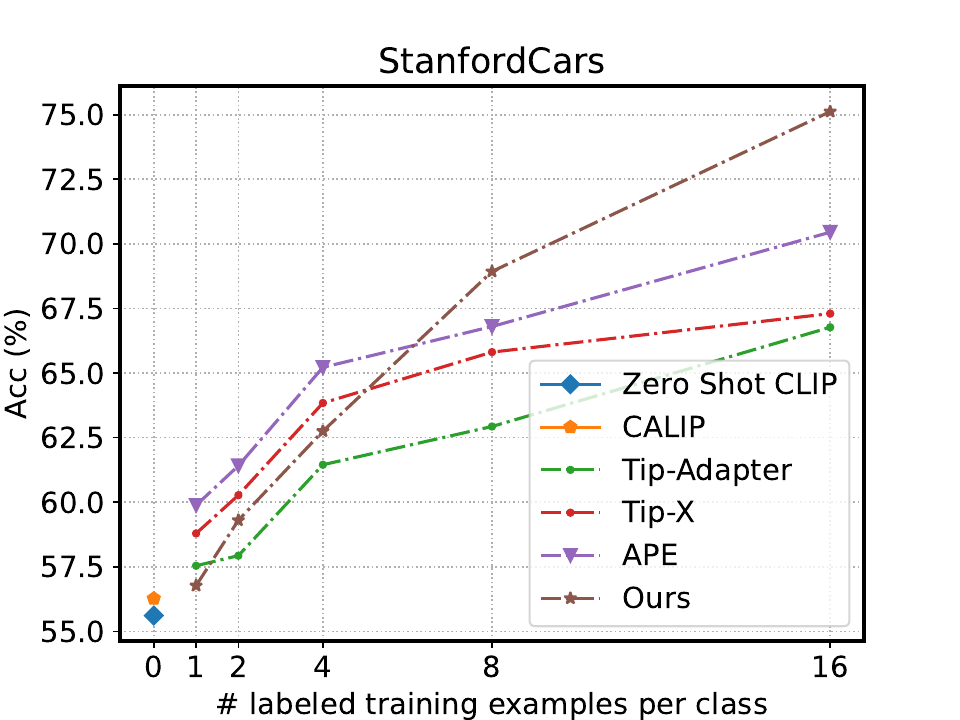}
    \end{minipage}
    \begin{minipage}{0.24\textwidth}
        \centering
        \includegraphics[width=1.13\textwidth]{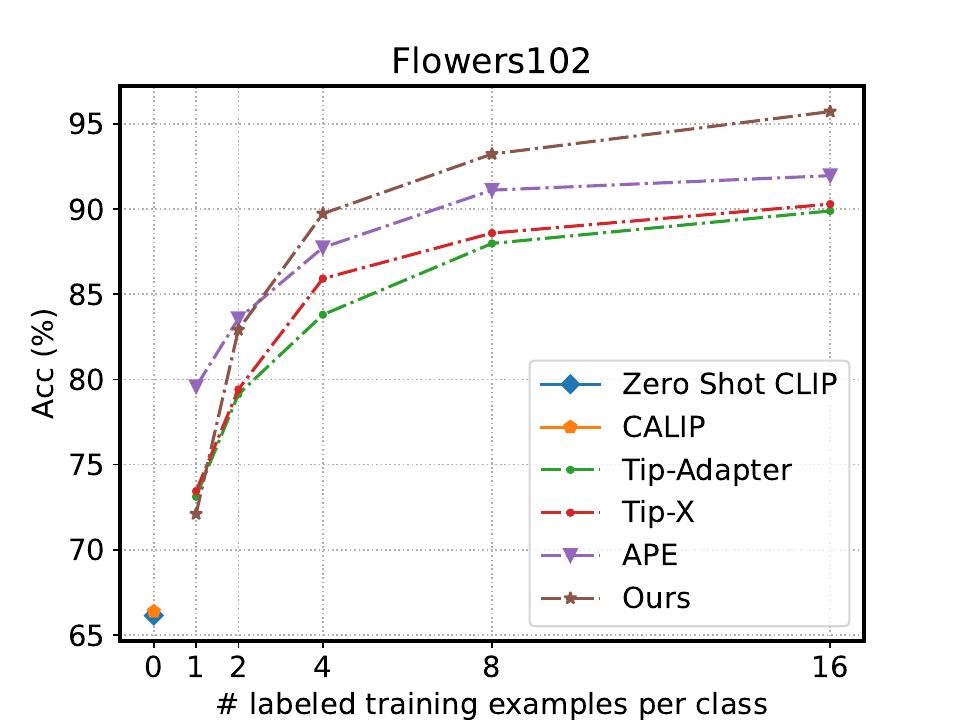}
    \end{minipage}
    \begin{minipage}{0.24\textwidth}
        \centering
        \includegraphics[width=1.13\textwidth]{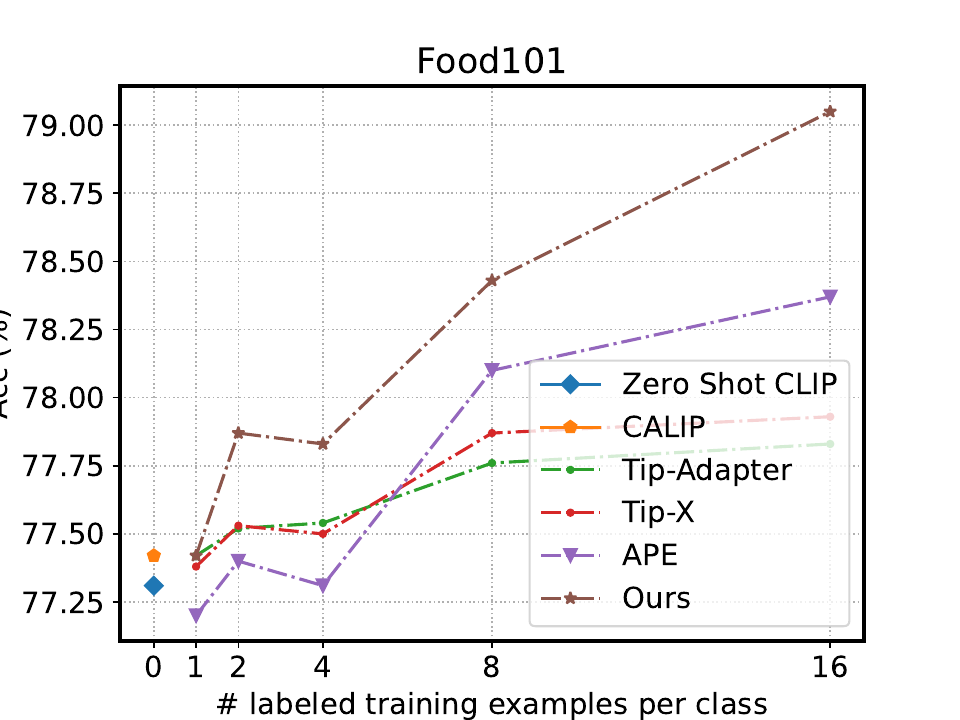}
    \end{minipage}
    \begin{minipage}{0.24\textwidth}
        \centering
        \includegraphics[width=1.13\textwidth]{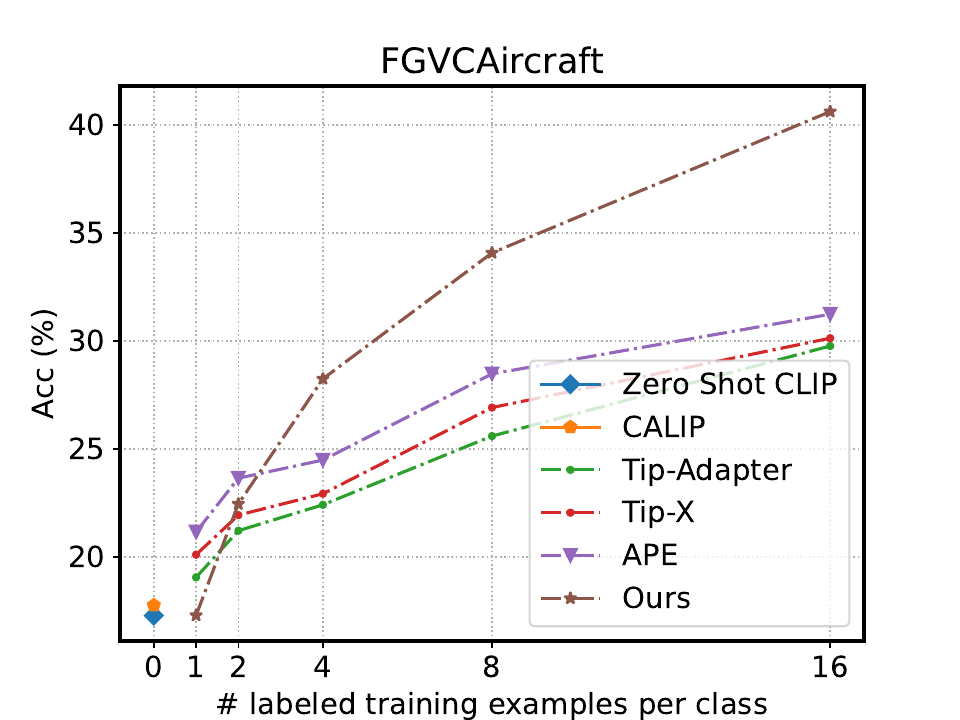}
    \end{minipage}
    \\
    \begin{minipage}{0.24\textwidth}
        \centering
        \includegraphics[width=1.13\textwidth]{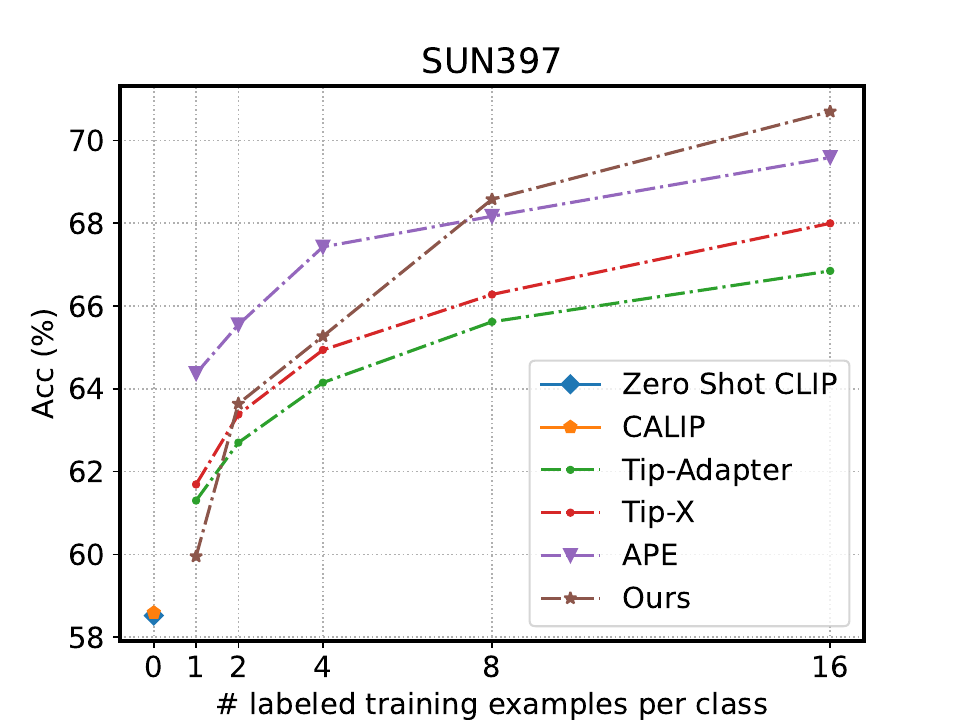}
    \end{minipage}
    \begin{minipage}{0.24\textwidth}
        \centering
        \includegraphics[width=1.13\textwidth]{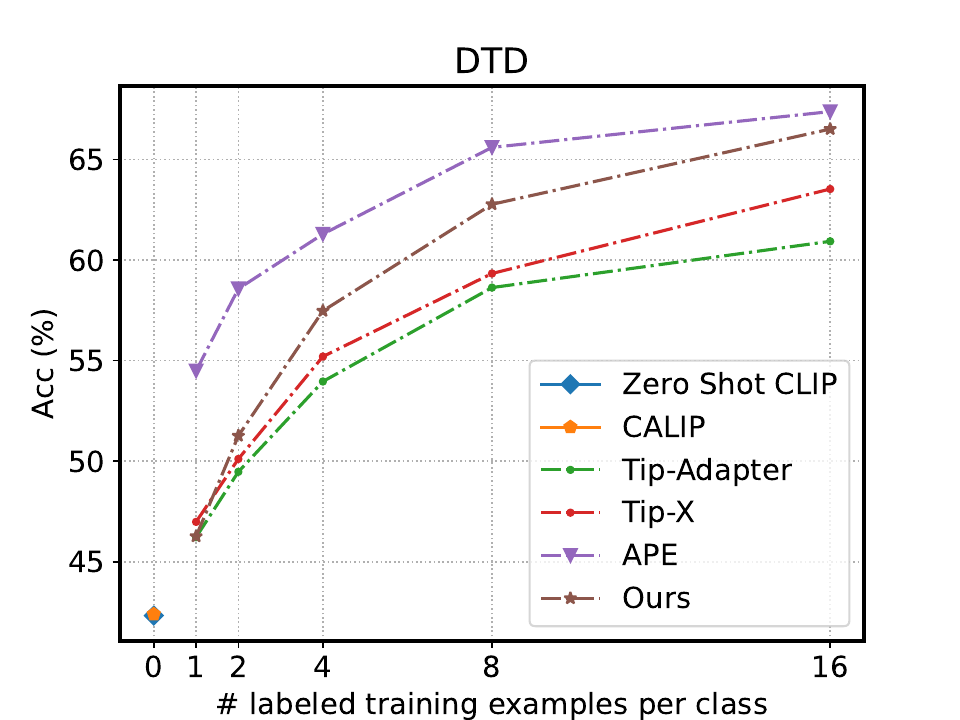}
    \end{minipage}
    \begin{minipage}{0.24\textwidth}
        \centering
        \includegraphics[width=1.13\textwidth]{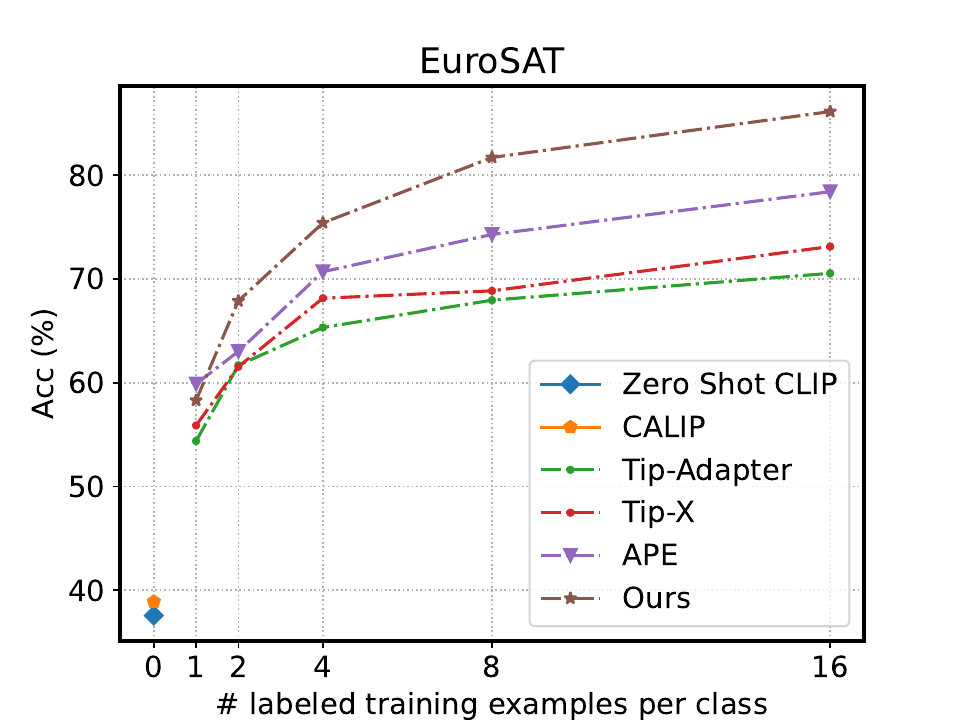}
    \end{minipage}
    \begin{minipage}{0.24\textwidth}
        \centering
        \includegraphics[width=1.13\textwidth]{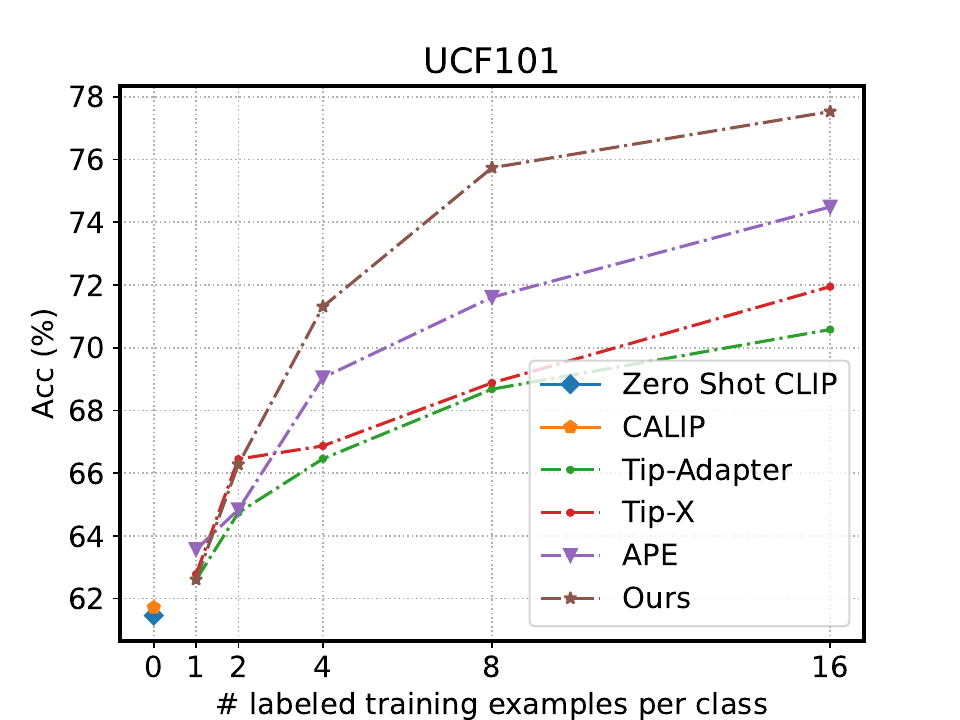}
    \end{minipage}
    \caption{
        \textbf{Results of few-shot classification on the 11 datasets.}
        We evaluate the performance of our proposed method against five training-free methods under 1, 2, 4, 8, and 16-shot settings. 
        The models are trained using ResNet-50 CLIP.
        Our method outperforms the baselines significantly.
  }
  \vspace{-10pt}
    \label{fig:fewshots}
\end{figure}

\textbf{Baselines.} We compare our method with two kinds of methods: training-required methods and training-free methods.
\textit{For training-required methods}, we consider four baselines: 
\textbf{(1) linear probe~\citep{radford2021learning}}: Following CLIP, we train a linear classifier on top of high-quality pre-trained CLIP vision encoder's features. 
\textbf{(2) CoOp~\citep{zhou2022learning}}: CoOp proposes to learn context prompts for downstream datasets through back-propagation. 
For comparison, we choose the best version of CoOp with 16 learnable prompts. 
\textbf{(3) CLIP-Adapter~\citep{gao2021clip}}: CLIP-Adapter proposes to train task-specific adapters to adjust the visual and textual representations.
\textbf{(4) Tip-Adapter-F~\citep{zhang2022tip}}: Tip-Adapter-F proposes to build a cache model using the training data to construct the adapter, which is then ensembled with the zero-shot classifier and fine-tuned during training.
\textit{For training-free methods}, we consider five baselines: 
\textbf{(1) Zero-Shot CLIP~\citep{radford2021learning}}: Following CLIP, we build the zero-shot classifier using zero-shot prompts such as ``a photo of a \{class\}".
\textbf{(2) CALIP~\citep{guo2022calip}}: CALIP builds a parameter-free attention module to boost CLIP.
\textbf{(3) Tip-Adapter~\citep{zhang2022tip}}: Tip-Adaper is the training-free version of Tip-Adapter-F. 
It builds the adapter with training data and ensembles it with the zero-shot classifier without training.
\textbf{(4) Tip-X~\citep{udandarao2022sus}}: Tip-X proposes retrieving images from LAION-5B~\citep{schuhmann2022laion} or Stable Diffusion~\citep{rombach2022high} to build the cache of Tip-Adapter.
\textbf{(5) APE~\citep{zhu2023not}}: APE adds a refinement module to Tip-Adapter, which minimizes the inter-class visual similarity and improves the text-image alignment. 
\setlength{\tabcolsep}{6pt}
\begin{table}[tbp]
  \centering
  \caption{
        \textbf{Results of few-shot classification on 11 datasets.}
        We report the performance of our method against training-free and training-required baselines on 16-shot datasets. 
        As shown in the table, our method greatly outperforms training-free baselines on average across the 11 datasets and achieves comparable performance as training-required methods.
        \textcolor{blue}{Blue} denotes the highest results of training-required methods.
        \textbf{Bold} denotes the highest results of training-free methods.
}
  \resizebox{1.0\textwidth}{!}{
    \begin{tabular}{ll|c|ccccccccccc|c}
    \toprule
    \multicolumn{2}{l|}{\textbf{Method}} & \multicolumn{1}{c|}{\textbf{Train}} & \textbf{Pets} & \textbf{Flowers} & \textbf{FGVC} & \textbf{DTD} & \textbf{EuroSAT} & \textbf{Cars} & \textbf{Food} & \textbf{SUN} & \textbf{Caltech} & \textbf{UCF} & \textbf{ImageNet} & \textbf{Average} \\
    \midrule
    \multicolumn{2}{l|}{linear-probe} & \ding{51}     & 76.42  & \color{blue}{94.95}  & \color{blue}{36.39}  & 63.97  & 82.76  & 70.08  & 70.17  & 67.15  & 90.63  & 73.72  & 55.87  & 71.10  \\
    \multicolumn{2}{l|}{CoOp} & \ding{51}     & 87.01  & 94.51  & 31.26  & 63.58  & 83.53  & 73.36  & 74.67  & 69.26  & 91.83  & 75.71  & 62.95  & 73.42  \\
    \multicolumn{2}{l|}{CLIP-Adapter} & \ding{51} & 87.84  & 93.90  & 32.10  & 65.96  & 84.43  & 74.01  & 78.25  & 69.55  & 92.49  & 76.76  & 63.59  & 74.44  \\
    \multicolumn{2}{l|}{Tip-Adapter-F} & \ding{51} & \color{blue}{89.70}  & 94.80  & 35.55  & \color{blue}{66.55}  & \color{blue}{84.54}  & \color{blue}{75.74}  & \color{blue}{79.43}  & \color{blue}{71.47}  & \color{blue}{92.86}  & \color{blue}{78.03}  & \color{blue}{65.51}  & \color{blue}{75.83}  \\
    \midrule
    \multicolumn{2}{l|}{Zero-Shot CLIP} & \ding{55} & 85.77  & 66.14  & 17.28  & 42.32  & 37.56  & 55.61  & 77.31  & 58.52  & 86.29  & 61.46  & 58.18  & 58.77  \\
    \multicolumn{2}{l|}{CALIP} & \ding{55} & 86.21  & 66.38  & 17.76  & 42.39  & 38.90  & 56.27  & 77.42  & 58.59  & 87.71  & 61.72  & 60.57  & 59.45  \\
    \multicolumn{2}{l|}{Tip-Adapter} & \ding{55}     & 88.14  & 89.89  & 29.76  & 60.93  & 70.54  & 66.77  & 77.83  & 66.85  & 90.18  & 70.58  & 62.01  & 70.32  \\
    \multicolumn{2}{l|}{Tip-X} & \ding{55}  & \textbf{89.86}  & 90.29  & 30.12  & 63.53  & 73.12  & 67.30  & 77.93  & 68.00  & 90.70  & 71.95  & 62.61  & 71.40  \\
    \multicolumn{2}{l|}{APE} & \ding{55}     & 87.98  & 91.96  & 31.23  & \textbf{67.38}  & 78.40  & 70.45  & 78.37  & 69.59  & 92.29  & 74.49  & 63.43  & 73.23  \\
    \rowcolor{gray!40}  
    \multicolumn{2}{l|}{Ours} & \ding{55}     & 88.81  & \textbf{95.72}  & \textbf{40.61}  & {66.51}  & \textbf{86.12}  & \textbf{75.12}  & \textbf{79.05}  & \textbf{70.70}  & \textbf{92.55}  & \textbf{77.53}  & \textbf{63.82}  & \textbf{76.05}  \\
    \bottomrule
    \end{tabular}%
}
    \vspace{-20pt}  
  \label{tab:fewshot_numerical}%
\end{table}%

\textbf{Results.}
Figure~\ref{fig:fewshots} illustrates the performance of our method and five other training-free baselines: Zero-Shot CLIP~\citep{radford2021learning}, CALIP~\citep{guo2022calip}, Tip-Adapter~\citep{zhang2022tip}, Tip-X~\citep{udandarao2022sus}, and APE~\citep{zhu2023not}, on the 11 downstream datasets, along with their average results.
Our method outperforms all of the baselines significantly.
Specifically, under the 16-shot setting, our method greatly exceeds Zero-Shot CLIP, CALIP, Tip-Adapter, Tip-X, and APE by 17.28\%, 16.60\%, 5.73\%, 4.65\%, and 2.82\%.  
Our method surpasses the baselines on almost all datasets except OxfordPets and DTD.

In Table~\ref{tab:fewshot_numerical}, we further present the numerical results of the training-required and training-free baselines under the 16-shot setting.
In the table, our method outperforms training-free methods on 9 of 11 datasets. 
Specifically, our method achieves a lead of over 3\% compared to the second-highest results on Flowers102, FGVCAircraft, EuroSAT, StanfordCars, and UCF101.
Moreover, our method achieves comparable performance with training-required methods.
And our method achieves great improvement on FGVCAircraft and EuroSAT.
This may be because images in these datasets are unusual, where the covariance of the data is important to describe the features.

\vspace{-5pt}
\subsection{Out-of-distribution Generalization}
\vspace{-5pt}
\begin{wrapfigure}[6]{r}{0.5\textwidth}
\vspace{-25pt}
    \centering
    \captionof{table}{
    \textbf{Out-of-distribution Generalization.} 
  }
    \vspace{-7pt}
    \resizebox{0.5\textwidth}{!}{
    \begin{tabular}{llccccccc}
    \toprule
    \multicolumn{2}{l}{\multirow{2}[4]{*}{Method}} & \multicolumn{1}{c}{\multirow{2}[4]{*}{Train}} & Source & \multicolumn{4}{c}{Target}    &  \\
\cmidrule(r){4-4} \cmidrule(r){5-9} \multicolumn{2}{c}{} &       & IN. & -V2   & -Sk & -A    & -R    & Avg. \\
    \midrule
    \multicolumn{2}{l}{CLIP} & \ding{55}     & 66.73  & 60.83  & 46.15  & 47.77  & 73.96  & 57.18  \\
    \multicolumn{2}{l}{CoOp} & \ding{51} & 71.92  & 64.18  & 46.71  & 48.41  & 74.32  & 58.41  \\
    \multicolumn{2}{l}{Tip-Adapter} & \ding{55}     & 70.50  & 63.31  & \underline{48.69}  & \textbf{50.64} & \textbf{77.70} & 60.08  \\
    \multicolumn{2}{l}{Tip-Adapter-F} & \ding{51} & \textbf{73.72} & \textbf{65.73} & 48.52  & 49.39  & \underline{77.22}  & \underline{60.21}  \\
    \rowcolor{gray!40}  
    \multicolumn{2}{l}{Ours} & \ding{55}     & \underline{72.23}  & \underline{65.04}  & \textbf{48.96} & \underline{50.51}  & 76.97  & \textbf{60.37} \\
    \bottomrule
    \end{tabular}%
  }
    \label{tab:robustness}%
\end{wrapfigure}
We further conduct experiments to assess our method on out-of-distribution generalization.
Specifically, we train our model using the 16-shot setting on ImageNet~\citep{deng2009imagenet}. 
Subsequently, we transfer the model directly to target datasets, which included ImageNetV2~\citep{recht2019imagenet}, ImageNet-Sketch~\citep{wang2019learning}, ImageNet-A~\citep{hendrycks2021natural}, and ImageNet-R~\citep{hendrycks2021many}.

As presented in Table~\ref{tab:robustness}, we choose CLIP, CoOp, Tip-Adapter, and Tip-Adapter-F for comparison.
And these methods are based on ViT-B/16-based CLIP.
Without requiring any training, our method achieves the highest results on average over the four target datasets.
These results indicate that our model is more advantageous in dealing with out-of-distribution generalization and reduces the risk of overfitting on the source dataset.
\vspace{-5pt}
\subsection{Results on Imbalanced Learning}
\vspace{-5pt}
\setlength{\tabcolsep}{6pt}
\begin{table}[htbp]
  \centering
  \caption{
    \textbf{Results of imbalanced learning on ImageNet-LT and Places-LT datasets.} 
    All models are trained on ResNet-101 CLIP.
    We compare our method with Zero-Shot CLIP~\citep{radford2021learning}, linear probe, full fine-tune, Balanced Softmax~\citep{ren2020balanced}, CRT~\citep{kang2020decoupling}, MARC~\citep{wang2023margin}, and their variants with Decoder~\citep{wang2023exploring}.
}
\vspace{-10pt}
  \resizebox{1.0\textwidth}{!}{
    \begin{tabular}{ccc|c|r|r|r|r|r|r|r|r|r|r}
    \toprule
    \multicolumn{3}{c|}{\multirow{2}[4]{*}{Method}} & \multicolumn{1}{c|}{\multirow{2}[4]{*}{Train}} & \multicolumn{5}{c|}{ImageNet-LT}      & \multicolumn{5}{c}{Places-LT} \\
\cmidrule{5-14}    \multicolumn{3}{c|}{} &       & \multicolumn{1}{c|}{Many} & \multicolumn{1}{c|}{Medium} & \multicolumn{1}{c|}{Few} & \multicolumn{1}{c|}{Overall} & \multicolumn{1}{c|}{F1} & \multicolumn{1}{c|}{Many} & \multicolumn{1}{c|}{Medium} & \multicolumn{1}{c|}{Few} & \multicolumn{1}{c|}{Overall} & \multicolumn{1}{c}{F1} \\
    \midrule
    \multicolumn{3}{l|}{Zero-Shot CLIP} & \ding{51}     & 59.57  & 53.57  & 52.81  & 53.62  & 52.50  & 36.23  & 30.43  & \textbf{37.89} & 32.17  & 30.85  \\
    \multicolumn{3}{l|}{Linear probe} & \ding{51}     & 24.23  & 0.00  & 0.00  & 9.33  & 4.97  & 23.50  & 0.20  & 0.00  & 8.52  & 4.81  \\
    \multicolumn{3}{l|}{Full finetune} & \ding{51}     & 74.49  & 52.82  & 26.66  & 57.61  & 55.86  & 47.32  & 28.66  & 11.80  & 32.08  & 30.40  \\
    \multicolumn{3}{l|}{Decoder + Softmax} & \ding{51}     & 66.93  & 42.09  & 15.32  & 48.01  & 45.21  & 28.42  & 15.80  & 10.36  & 14.95  & 13.04  \\
    \multicolumn{3}{l|}{Decoder + Balanced Softmax} & \ding{51}     & 60.60  & 52.17  & 40.53  & 53.83  & 52.57  & 20.47  & 21.79  & 21.47  & 21.53  & 18.55  \\
    \multicolumn{3}{l|}{Decoder + MARC} & \ding{51}     & 58.29  & 54.73  & 46.91  & 55.04  & 54.35  & 12.83  & 25.96  & 27.31  & 25.14  & 22.08  \\
    \multicolumn{3}{l|}{Decoder + CRT} & \ding{51}     & 66.89  & 51.98  & 23.82  & 53.89  & 51.77  & 33.14  & 14.70  & 5.01  & 12.77  & 10.78  \\
    \multicolumn{3}{l|}{Full finetune Balanced Softmax} & \ding{51}     & 69.18  & 58.25  & 43.63  & 60.47  & 59.79  & 42.42  & 38.41  & 27.93  & 37.81  & 37.21  \\
    \multicolumn{3}{l|}{Full finetune CRT} & \ding{51}     & \textbf{75.69} & 56.43  & 27.47  & 59.90  & 58.21  & \textbf{47.81} & 30.77  & 13.39  & 33.51  & 33.04  \\
    \multicolumn{3}{l|}{Full finetune MARC} & \ding{51}     & 73.73  & 58.69  & 32.91  & 60.97  & 59.61  & 46.57  & 38.09  & 17.51  & 37.13  & 35.95  \\
    \rowcolor{gray!40}  
    \multicolumn{3}{l|}{Ours} & \ding{55}     & 65.72  & \textbf{61.88} & \textbf{54.35} & \textbf{62.34} & \textbf{61.63} & 44.71  & \textbf{42.67} & 35.79  & \textbf{42.07} & \textbf{40.78} \\
    \bottomrule
    \end{tabular}%
  }
  \vspace{-10pt}
  \label{tab:longtail}%
\end{table}%
For imbalanced learning, we compare our method with several imbalanced learning baselines: Zero-Shot CLIP, linear probe, full fine-tuning, and CLIP integrated with specific imbalanced algorithms, namely: MARC~\citep{wang2023margin}, CRT~\citep{kang2020decoupling}, Balanced Softmax~\citep{ren2020balanced}, and their variants with Decoder~\citep{wang2023exploring} on ImageNet-LT~\citep{liu2019large} and Places-LT~\citep{zhou2017places} dataset.

The results are shown in Table~\ref{tab:longtail}.
We report the results in terms of overall accuracy, many-shot accuracy, medium-shot accuracy, and few-shot accuracy, as well as the F1 score.
Specifically, we achieve improvements against Zero-Shot CLIP of 8.72\%  and 9.90\% for ImageNet-LT and Places-LT datasets on overall accuracy, respectively.
It is worth noting that our method surpasses previous training-required methods, even those trained using imbalanced algorithms.
Our method primarily enhances CLIP's performance in terms of medium-shot and few-shot accuracy.
This improvement can be attributed to the identical covariance assumption in GDA, which transfers the knowledge of feature distribution from many-shot classes to medium- and few-shot classes.

\subsection{Results on Base-to-new Generalization}
\begin{wrapfigure}[7]{r}{0.35\textwidth}
\vspace{-20pt}
    \centering
    \captionof{table}{
    \textbf{Average results over 11 datasets on base-to-new generalization.} 
  }
  \vspace{-10pt}
   \resizebox{0.35\textwidth}{!}{
        \begin{tabular}{lccc|c}
        \toprule
              & train & base  & new   & \textbf{H} \\
        \midrule
        CLIP  & \ding{55} & 69.34  & 74.22  & 71.70  \\
        CoOp  & \ding{51} & 82.69  & 63.22  & 71.66  \\
        CoCoOp & \ding{51} & 80.47  & 71.69  & 75.83  \\
        KgCoOp & \ding{51} & 80.73  & 73.60  & 77.00  \\
        \rowcolor{gray!40}  
        Ours  & \ding{55} & \textbf{83.96}  & \textbf{74.53}  & \textbf{78.72}  \\
        \bottomrule
        \end{tabular}%
       }
    \label{tab:b2n}%
\end{wrapfigure}
Table~\ref{tab:b2n} presents the results on base-to-new generalization, which show that our approach outperforms the other methods in terms of base accuracy, new accuracy, and their harmonic mean. 
On average across 11 datasets, our method surpasses CLIP, CoOp, CoCoOp, and KgCoOp by 0.31\%, 11.31\%, 2.84\%, and 0.93\% in terms of new accuracy, and by 7.02\%, 7.06\%, 2.89\%, and 1.72\% in terms of the harmonic mean. 
Detailed results on each dataset can be referred to Figure~\ref{tab:base2new} in the Appendix.
\subsection{Results on Unsupervised Learning}
\setlength{\tabcolsep}{6pt}
\begin{table}[htbp]
  \centering
  \caption{
    \textbf{Results of unsupervised learning. }
    We compare our method with three baseline methods: Zero-Shot CLIP~\citep{radford2021learning}, POUF~\citep{tanwisuth2023pouf} and UPL~\citep{huang2022unsupervised}.
}
\vspace{-10pt}
  \resizebox{1.0\textwidth}{!}{
    \begin{tabular}{llcccccccccccc}
    \toprule
    \multicolumn{2}{c}{\textbf{Method}} & \textbf{Pet} & \textbf{Flo} & \textbf{FGVC} & \textbf{DTD} & \textbf{EuroSAT} & \textbf{Cars} & \textbf{Food} & \textbf{SUN} & \textbf{Cal} & \textbf{UCF} & \textbf{IN} & \textbf{Avg.} \\
    \midrule
    \multicolumn{2}{l}{CLIP} & 85.77  & 66.14  & 17.28  & 42.32  & 37.56  & 55.61  & 77.31  & 58.52  & 86.29  & 61.46  & 58.18  & 58.77  \\
    \multicolumn{2}{l}{POUF} & 88.00 & 66.71 & 16.67 & 41.49 & 42.06 & 57.43 & 74.70 & 58.61 & 86.92 & 61.05 & 55.16 & 58.98 \\
    \multicolumn{2}{l}{UPL} & 88.28  & 68.90  & 17.34  & 46.57  & \textbf{54.83} & \textbf{62.13} & 77.58  & \textbf{63.98} & \textbf{89.94} & 67.17  & 60.51  & 63.38  \\
    \rowcolor{gray!40}  
    \multicolumn{2}{l}{Ours} & \textbf{89.90} & \textbf{72.65} & \textbf{18.69} & \textbf{46.81} & 49.92  & 60.78  & \textbf{78.25} & 63.60  & 87.53  & \textbf{68.70} & \textbf{61.21} & \textbf{63.46} \\
    \bottomrule
    \end{tabular}%
}
\vspace{-10pt}
  \label{tab:unsupervised}%
\end{table}%
In unsupervised learning, the estimation of mean vectors and covariance matrices in GDA is performed by directly applying the EM algorithm for Gaussian Mixture Model (GMM).
The results are shown in Table~\ref{tab:unsupervised}.
It is noteworthy that this straightforward approach significantly enhances the performance of CLIP in downstream tasks when utilizing unlabeled data. 
Furthermore, our method consistently outperforms the Zero-Shot CLIP by an average margin of 4.69\% across all 11 datasets.
Moreover, when compared to the three baseline methods, our approach achieves the highest results on 7 out of the 11 datasets. 
These results clearly indicate the effectiveness of our method.

\begin{minipage}[!]{0.48\textwidth}
\centering
  \resizebox{1.0\textwidth}{!}{
    \begin{tabular}{cc|rrrrr}
    \toprule
    \multicolumn{2}{c|}{Method} & \multicolumn{1}{c}{1} & \multicolumn{1}{c}{2} & \multicolumn{1}{c}{4} & \multicolumn{1}{c}{8} & \multicolumn{1}{c}{16} \\
    \midrule
    \multicolumn{2}{c|}{Moore-Penrose} & 53.87  & 64.22  & 72.74  & 79.21  & 84.16  \\
    \multicolumn{2}{c|}{EM} & 54.28  & 50.85  & 56.86  & 70.69  & 74.35  \\
    \multicolumn{2}{c|}{GraphicalLasso} & 57.69  & 64.22  & 71.70  & 76.89  & 76.56  \\
    \multicolumn{2}{c|}{LedoitWolf} & \textbf{59.59} & 67.01  & 74.23  & 80.52  & 84.33  \\
    \multicolumn{2}{c|}{OAS} & 59.58  & 66.81  & 74.25  & 80.46  & 84.31  \\
    \rowcolor{gray!40}  
    \multicolumn{2}{c|}{KS} & 58.30  & \textbf{67.88} & \textbf{75.40} & \textbf{81.70} & \textbf{86.12} \\
    \bottomrule
    \end{tabular}%
  }
  \vspace{-5pt}
  \captionof{table}{
    Comparison of different precision matrix estimation methods on EuroSAT, including Moore-Penrose~\citep{penrose1955generalized}, EM~\citep{efron1976multivariate}, GraphicalLasso~\citep{friedman2008sparse}, LedoitWolf~\citep{ledoit2004well}, OAS~\citep{chen2010shrinkage}, and KS~\citep{kubokawa2008estimation}. The grey color denotes the one used in the paper.
}
  \label{tab:precision}%
\end{minipage}
\hspace{20pt}
\begin{minipage}[!]{0.4\textwidth}
\centering
    \includegraphics[trim={0.15cm 0.15cm 0.15cm 0.15cm},clip,width=1.0\textwidth]{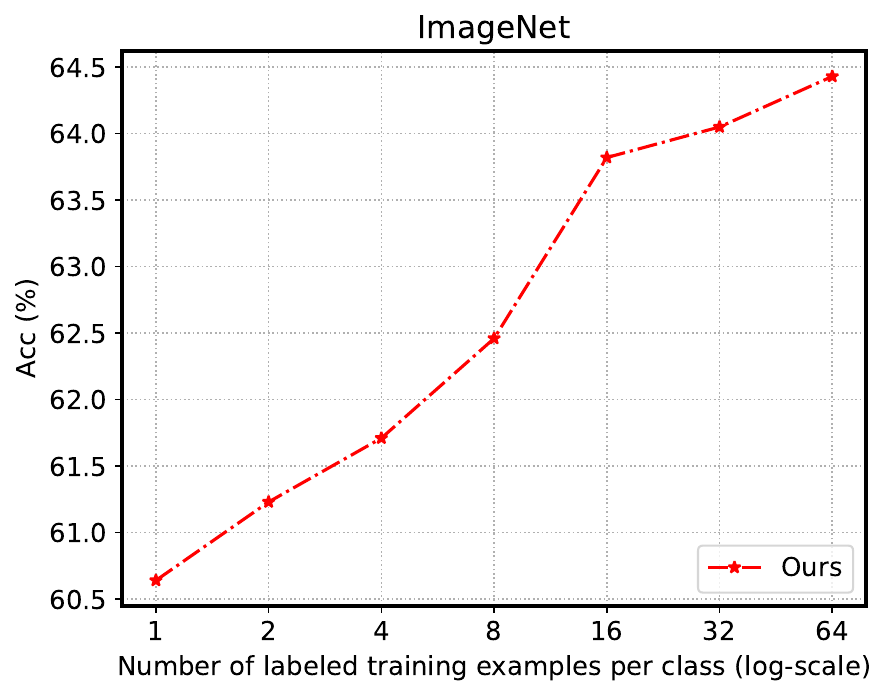}
    \captionof{figure}{
        We trained our method on ImageNet with more shots.
        The x-axis is presented on a logarithmic scale.
}
    \label{fig:moreshots}
\end{minipage}
\subsection{Ablation Study}
\vspace{-5pt}
\textbf{Effectiveness of Precision Matrix Estimation.}
The estimation of the precision matrix is challenging due to the limited data and bias problem.
To address this, we employ the empirical Bayes ridge-type estimator (KS) in the paper, which is specifically designed for scenarios where the sample size is smaller than the dimension.
We compare it with other robust precision estimation techniques, including Moore-Penrose, the estimator in Efron and Morris (EM), GraphicalLasso, LedoitWolf, and OAS.
As shown in Table~\ref{tab:precision}, the empirical Bayes ridge-type estimator achieves the best results, which shows its effectiveness.

\vspace{-5pt}
\textbf{Effectiveness of Increased Sample Size.}
We further train our method with more training data.
Figure~\ref{fig:moreshots} illustrates the results of training our model on ImageNet, using 1, 2, 4, 8, 16, 32, and 64 shots per class. 
The x-axis is presented on a logarithmic scale.
We observe that the model performance increases with an increase in the number of data, and it exhibits a linear relationship with the logarithm of the number of data.
This indicates that our approach is not restricted to few-shot learning, but instead has the ability to improve consistently with an increase in the number of samples.

\begin{minipage}[!]{0.48\textwidth}
    \centering
    \resizebox{1.0\textwidth}{!}{
    \begin{tabular}{llcccc}
    \toprule
    \multicolumn{2}{l}{Method} & Acc.(\%) & Param. & Train.Set & Train.Time \\
    \midrule
    \multicolumn{2}{l}{ResNet-50} & 74.2  & 25.6M & full set & $>1$ day \\
    \multicolumn{2}{l}{ResNet-101} & 77.4  & 44.5M & full set & $>1$ day \\
    \midrule
    \multicolumn{2}{l}{DeiT-T} & 72.2  & 6.0M  & full set & $>1$ day \\
    \multicolumn{2}{l}{DeiT-S} & 79.9  & 22.1M & full set & $>1$ day \\
    \midrule
    \multicolumn{2}{l}{Tip-Adapter} & 76.1  & 0M     & 16-shot & 0 \\
    \multicolumn{2}{l}{Tip-Adapter*} &  -   &  -  & full set  & - \\
    \midrule
    \multicolumn{2}{l}{Tip-Adapter-F} & 79.4  & 6.2M   & 16-shot & 6 min \\
    \multicolumn{2}{l}{Tip-Adapter-F*} & -     & -   & full set & - \\
    \midrule
    \rowcolor{gray!40}  
    \multicolumn{2}{l}{Ours} & 79.1  & 0M     & 16-shot & 1.6 sec \\
    \rowcolor{gray!40}  
    \multicolumn{2}{l}{Ours} & \textbf{80.0} & 0M     & full set & 3.6 sec \\
    \bottomrule
    \end{tabular}%
}
    \captionof{table}{
        * denotes that the model is out-of-memory.
        Comparison between Tip-Adapter, Tip-Adapter-F, and conventional methods, ResNet and DeiT, trained by full training set on ImageNet.
        The training time is tested on a single NVIDIA GeForce RTX 3090.
}
    \label{fig:fulltrain}
\end{minipage}
\hspace{0.5cm}
\begin{minipage}[!]{0.40\textwidth}
  \centering
  \includegraphics[trim={0.15cm 0.15cm 0.15cm 0.15cm},clip,width=1.0\textwidth]
  {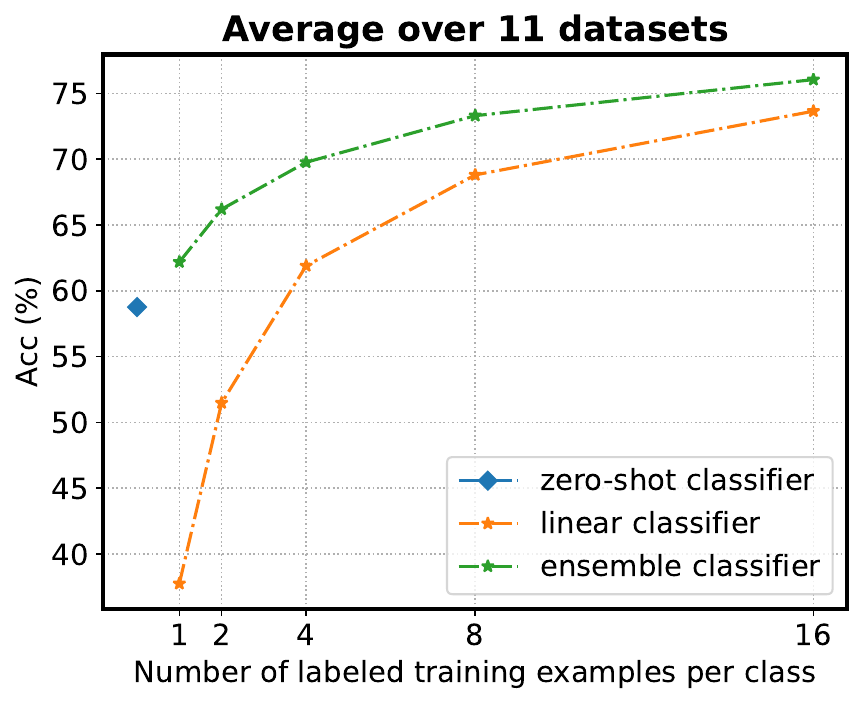}
  \vspace{-20pt}
  \captionof{figure}{
    The figure depicts the performance of the CLIP zero-shot classifier, our linear classifier, and their ensemble in the different shot settings on average across 11 datasets.
}
  \label{fig:classifier}%
\end{minipage}

\vspace{-5pt}
\textbf{Effectiveness of Ensemble Classifier.}
We evaluate the effectiveness of the ensemble of linear classifiers, as presented in Eq.~(\ref{eq:ensemble}).
Figure~\ref{fig:classifier} shows the performance of the CLIP zero-shot classifier, our linear classifier, and the ensemble classifier in few-shot classification on average on the 11 datasets.
We observe that directly using the linear classifier sometimes produces worse results than using the zero-shot classifier.
This can be attributed to inaccuracies in the estimated precision matrix, leading to a poor classifier.
However, when the classifiers are ensembled according to Eq.~(\ref{eq:ensemble}), the ensemble classifier outperforms both individual classifiers in all settings, demonstrating its effectiveness.

\vspace{-5pt}
\textbf{Comparison to Fully-trained Methods.}
In Table~\ref{fig:fulltrain}, we compare efficient fine-tuning methods, Tip-Adapter, Tip-Adapter-F, and our proposed method, with conventional fully trained methods such as ResNet~\citep{he2016deep} and DeiT~\citep{touvron2021training}.
We adopt ViT-L/14  CLIP for efficient fine-tuning methods.
Although Tip-Adapter and Tip-Adapter-F achieve comparable performance to conventional methods~\citep{he2016deep, touvron2021training}, they fail to train on full set as they need to cache all the training data, which leads to OOM error.
In contrast, our proposed method does not have this problem since we only store the classifier parameters.
Therefore, our approach can perform well not only on few-shot but also on the full training set.
Furthermore, without requiring any training, our approach achieves the highest performance compared to both efficient fine-tuning methods and conventional training methods.
\vspace{-10pt}
\section{Conclusion}
\vspace{-10pt}
In this paper, we revisit Gaussian Discriminant Analysis (GDA) with CLIP as a hard-to-beat training-free adaptation method.
Without any training, we can directly obtain the classifier from the mean vectors and covariance of the training dataset.
We conduct extensive experiments of our method on CLIP few-shot classification and imbalanced learning, and its two simple variants on base-to-new generalization and unsupervised learning.
Our method achieves state-of-the-art results against previous training-free methods and is comparable to or even better than training-required methods.
These results demonstrate the effectiveness of our method. 
In the future, we will explore the application of our method in dense prediction tasks and other scenarios such as test-time adaptation.

\section*{Reproducibility Statement}
In this paper, we provide a comprehensive overview of the datasets, training procedures, and evaluation settings, which are thoroughly discussed in Section~\ref{subsec:setup}. Detailed statistics for the datasets, prompt templates, and pseudocode can be found in Appendix~\ref{appendix:b}. To ensure the reproducibility of our method, we have also made the source code and scripts available in the supplementary materials.

\bibliography{iclr2024_conference}
\bibliographystyle{iclr2024_conference}

\newpage
\appendix
\onecolumn

\noindent{\large Supplementary Materials Organization:}

\noindent\DoToC

\section{Details of the Method}

\subsection{Computation of Equation~(2)}
\begin{theorem}
Assuming that the features of different classes follow the Gaussian distribution with identical covariance, i.e., $(X|Y=i)\sim \mathcal{N}(\mu_i, \Sigma)$ for $i=1,2,..,K$. Then, the classification probability can be expressed as follows:
\begin{equation}
    p(y=i|x) = \frac{\exp(\mu_i^T\Sigma^{-1}x - \frac{1}{2}\mu_i^T\Sigma^{-1}\mu_i + \log p_i)}{\sum_{j=1}^K\exp(\mu_j^T\Sigma^{-1}x - \frac{1}{2}\mu_j^T\Sigma^{-1}\mu_j + \log p_j)},
\end{equation}
\end{theorem}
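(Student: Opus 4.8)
The plan is to substitute the multivariate Gaussian density into Bayes' formula, Eq.~(\ref{eq:bayes}), and simplify. First I would write the class-conditional density as $p(x\mid y=i) = (2\pi)^{-D/2}|\Sigma|^{-1/2}\exp(-\frac{1}{2}(x-\mu_i)^T\Sigma^{-1}(x-\mu_i))$, using that every class shares the same covariance $\Sigma$. Then I would expand the quadratic form, invoking symmetry of $\Sigma^{-1}$, as $(x-\mu_i)^T\Sigma^{-1}(x-\mu_i) = x^T\Sigma^{-1}x - 2\mu_i^T\Sigma^{-1}x + \mu_i^T\Sigma^{-1}\mu_i$, so that the logit in Eq.~(\ref{eq:bayes}), $f_i(x) = \log(p(x\mid y=i)\,p_i)$, becomes $f_i(x) = -\frac{D}{2}\log(2\pi) - \frac{1}{2}\log|\Sigma| - \frac{1}{2}x^T\Sigma^{-1}x + \mu_i^T\Sigma^{-1}x - \frac{1}{2}\mu_i^T\Sigma^{-1}\mu_i + \log p_i$.

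Next I would observe that the first three terms — the normalization constant $-\frac{D}{2}\log(2\pi) - \frac{1}{2}\log|\Sigma|$ and the quadratic term $-\frac{1}{2}x^T\Sigma^{-1}x$ — are independent of the class index $i$. Writing $f_i(x) = c(x) + g_i(x)$ with $c(x)$ collecting these common terms and $g_i(x) = \mu_i^T\Sigma^{-1}x - \frac{1}{2}\mu_i^T\Sigma^{-1}\mu_i + \log p_i$, substitution into the softmax of Eq.~(\ref{eq:bayes}) lets me factor $\exp(c(x))$ out of both the numerator and every summand of the denominator, where it cancels. This leaves $p(y=i\mid x) = \exp(g_i(x))/\sum_{j=1}^K\exp(g_j(x))$, which is precisely the asserted identity, and simultaneously reads off the weight and bias in Eq.~(\ref{eq:solution}).

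I do not expect a genuine obstacle; the computation is elementary. The one point deserving emphasis is that the identical-covariance hypothesis is exactly what renders both $|\Sigma|$ and the quadratic term $x^T\Sigma^{-1}x$ class-independent, hence cancellable; without it the discriminant would retain a quadratic-in-$x$ term and the classifier would no longer be linear. I would therefore present the cancellation step explicitly rather than gloss over it, since it is the crux of why GDA with shared covariance yields a softmax-linear (rather than softmax-quadratic) classifier.
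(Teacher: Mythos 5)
Your proposal is correct and follows essentially the same route as the paper's proof: substitute the shared-covariance Gaussian density into Bayes' formula, expand the quadratic form, and cancel the class-independent normalization constant and the $-\frac{1}{2}x^T\Sigma^{-1}x$ term, leaving the softmax of the linear discriminant. Your explicit emphasis on why the identical-covariance assumption is what makes these terms cancellable is a welcome clarification of the same cancellation the paper marks in red.
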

\begin{proof}
Since $(X|Y=i)\sim \mathcal{N}(\mu_i, \Sigma)$ for $i=1,2,..,K$, the probability of class $i$ is:
\begin{equation}
\label{eq:multi_gaussian}
    p(x|y=i) = \frac{1}{(2\pi)^\frac{d}{2}|\Sigma|^{\frac{1}{2}}}\exp(-\frac{(x - \mu_i)^T\Sigma^{-1}(x - \mu_i)}{2}),
\end{equation}
where $d$ is the feature dimension.
Later, the classification probability can be derived by using the Bayesian formula,
\begin{equation}
\begin{aligned}
    p(y=i|x) 
    & = \frac{p(x|y=i)p(y=i)}{\sum_{j=1}^{K}p(x|y=j)p(y=j)} \quad \text{(Bayesian formula)} \\
    & = \frac{\frac{1}{(2\pi)^\frac{d}{2}|\Sigma|^{\frac{1}{2}}}\exp(-\frac{(x - \mu_i)^T\Sigma^{-1}(x - \mu_i)}{2})p(y=i)}{\sum_{j=1}^{K}\frac{1}{(2\pi)^\frac{d}{2}|\Sigma|^{\frac{1}{2}}}\exp(-\frac{(x - \mu_j)^T\Sigma^{-1}(x - \mu_j)}{2})p(y=j)} \quad \text{(Using Equation~\ref{eq:multi_gaussian})} \\
    & = 
    \frac{
    {\color{red}\cancel{\frac{1}{(2\pi)^\frac{d}{2}|\Sigma|^{\frac{1}{2}}}}}
    \exp
    ({\color{red}{\cancel{-\frac{x^T\Sigma^{-1}x}{2}}}} + \mu_i\Sigma^{-1}x - \frac{\mu_i^T\Sigma^{-1}\mu_i}{2})p(y=i)}
    {\sum_{j=1}^{K}{\color{red}{\cancel{\frac{1}{(2\pi)^\frac{d}{2}|\Sigma|^{\frac{1}{2}}}}}}\exp({\color{red}{\cancel{-\frac{x^T\Sigma^{-1}x}{2}}}} + \mu_j\Sigma^{-1}x - \frac{\mu_j^T\Sigma^{-1}\mu_j}{2})p(y=j)}  \\
    & = \frac{\exp(\mu_i^T\Sigma^{-1}x - \frac{1}{2}\mu_i^T\Sigma^{-1}\mu_i + \log p_i)}{\sum_{j=1}^K\exp(\mu_j^T\Sigma^{-1}x - \frac{1}{2}\mu_j^T\Sigma^{-1}\mu_j + \log p_j)} \quad \text{(denoted $p_i = p(y=i)$)}
\end{aligned}
\end{equation}

\end{proof}

\subsection{Pseudocode}

\begin{algorithm}[H]
\caption{Pytorch-like pseudocode for our method.}
\begin{minted}[
% frame=lines,
framesep=2mm,
baselinestretch=1.2,
fontsize=\footnotesize,
linenos
]{python3}
# Input:
# - X: (N, D) visual features from CLIP visual encoder.
# - Y: (N, ) ground-truth label for the features.
# - X_test: (M, D) test visual features from CLIP visual encoder.
# - Y_test: (M, ) ground-truth label for test features.
# - W_c: (K, D) zero-shot classifier generated by prompting.
# Output:
# - acc: test accuracy.

def hard_to_beat(X, Y, X_test, Y_test, W_c):
    # 1. Compute mean vectors for each class.
    mus = []
    for i in range(K):
        idx = torch.where(Y == i)
        mus.append(X[idx].mean(dim=0))
    mus = torch.cat(mus)
    
    # 2. Estimate the precision matrix using Equation (4).
    # centered features
    centered_X = torch.cat([(X[torch.where(Y == i)] - mus[i]) for i in range(K)])
    cov = torch.cov(centered_X)
    # compute the precision matrix (inverse covariance)
    inv_cov = D * torch.inv((N - 1) * cov + trace(cov) * eye(D))
    
    # 3. Compute weight and bias using Equation (3).
    W = mus @ inv_cov
    b = log(1 / K) - 0.5 * einsum('nd, dc, nc -> n', mus, inv_cov, mus)
    
    # 4. Search the hyperparameter using the validation set.
    alpha = search_hyperparam(W_c, W, b)
    
    # 5. Test.
    test_logits = X_test @ W_c.T + alpha * (X_test @ W.T + b)
    acc = compute_acc(test_logits, Y_test)
return acc
\end{minted}
\end{algorithm}

\section{More Experimental Analysis}
\label{appendix:a}

\subsection{Base-to-new Generalization}

\textbf{Results.}
Our method can be extended to the base-to-new generalization scenario by incorporating the KNN algorithm.
To accomplish this, we utilize the text embeddings of the new classes to query the training set and select the k nearest neighbors as the training data for the new class.
Subsequently, we apply our proposed method to generate the classifier for the new classes using the synthesized dataset.
In order to compare our approach, we select CLIP~\citep{radford2021learning}, CoOp~\citep{zhou2022learning}, CoCoOp~\citep{zhou2022conditional}, and KgCoOp~\citep{yao2023visual}.

Table~\ref{tab:base2new} presents the results, which demonstrate that our approach outperforms the other methods in terms of base accuracy, new accuracy, and their harmonic mean. 
On average across 11 datasets, our method surpasses CLIP, CoOp, CoCoOp, And KgCoOp by 14.62\%, 1.27\%, 3.49\%, and 3.23\% in terms of base accuracy.
It also outperforms them by 0.31\%, 11.31\%, 2.84\%, and 0.93\% in terms of new accuracy, and by 7.02\%, 7.06\%, 2.89\%, and 1.72\% in terms of the harmonic mean. 
Moreover, our approach achieves the highest harmonic mean in 6 out of 11 datasets. These results clearly indicate the effectiveness of our approach in generalizing to new classes.

\begin{table}[!htbp]
    \begin{minipage}{0.325\textwidth}
        \centering
        \caption*{\small{(a) \textbf{Average over 11 datasets}}}
        \resizebox{1.0\textwidth}{!}{
        \vspace{-10pt}
        \begin{tabular}{lcc|c}
        \toprule
              & base  & new   & \textbf{H} \\
        \midrule
        CLIP  & 69.34  & 74.22  & 71.70  \\
        CoOp  & 82.69  & 63.22  & 71.66  \\
        CoCoOp & 80.47  & 71.69  & 75.83  \\
        KgCoOp & 80.73  & 73.60  & 77.00  \\
        \rowcolor{gray!40}  
        Ours  & \textbf{83.96}  & \textbf{74.53}  & \textbf{78.72}  \\
        \bottomrule
        \end{tabular}%
  }
    \end{minipage}
    \begin{minipage}{0.325\textwidth}
        \centering
        \caption*{\small{(b) ImageNet}}
        \resizebox{1.0\textwidth}{!}{
        \begin{tabular}{lcc|c}
        \toprule
              & base  & new   & \textbf{H} \\
        \midrule
        CLIP  & 72.43  & 68.14  & 70.22  \\
        CoOp  & \textbf{76.47}  & 67.88  & 71.92  \\
        CoCoOp & 75.98  & \textbf{70.43}  & \textbf{73.10}  \\
        KgCoOp & 75.83  & 69.96  & 72.78  \\
        \rowcolor{gray!40}  
        Ours  & 75.95  & 69.79  & 72.74  \\
        \bottomrule
        \end{tabular}%
  }
    \end{minipage}
    \begin{minipage}{0.325\textwidth}
        \centering
        \caption*{\small{(c) Caltech101}}
        \resizebox{1.0\textwidth}{!}{
        \begin{tabular}{lcc|c}
        \toprule
              & base  & new   & \textbf{H} \\
        \midrule
        CLIP  & 96.84  & 94.00  & 95.40  \\
        CoOp  & 98.00  & 89.81  & 93.73  \\
        CoCoOp & 97.96  & 93.81  & 95.84  \\
        KgCoOp & 97.72  & 94.39  & 96.03  \\
        \rowcolor{gray!40}  
        Ours  & \textbf{98.04}  & \textbf{94.51}  & \textbf{96.24}  \\
        \bottomrule
        \end{tabular}%
  }
    \end{minipage}
    
    \begin{minipage}{0.325\textwidth}
        \centering
        \caption*{\small{(d) OxfordPets}}
        \resizebox{1.0\textwidth}{!}{
        \begin{tabular}{lcc|c}
        \toprule
              & base  & new   & \textbf{H} \\
        \midrule
        CLIP  & 91.17  & 97.26  & 94.12  \\
        CoOp  & 93.67  & 95.29  & 94.47  \\
        CoCoOp & \textbf{95.20}  & {97.69}  & 
        \textbf{96.43}  \\
        KgCoOp & 94.65  & \textbf{97.76}  & 96.18  \\
        \rowcolor{gray!40}  
        Ours  & 94.10  & 97.15  & 95.60  \\
        \bottomrule
        \end{tabular}%
  }
    \end{minipage}
    \begin{minipage}{0.325\textwidth}
        \centering
        \caption*{\small{(e) StanfordCars}}
        \resizebox{1.0\textwidth}{!}{
        \begin{tabular}{lcc|c}
        \toprule
              & base  & new   & \textbf{H} \\
        \midrule
        CLIP  & 63.37  & {74.89}  & 68.65  \\
        CoOp  & 78.12  & 60.40  & 68.13  \\
        CoCoOp & 70.49  & 73.59  & 72.01  \\
        KgCoOp & 71.76  & \textbf{75.04}  & \textbf{73.36 } \\
        \rowcolor{gray!40}  
        Ours  & \textbf{78.71}  & 66.92  & {72.34}  \\
        \bottomrule
        \end{tabular}%
  }
    \end{minipage}
    \begin{minipage}{0.325\textwidth}
        \centering
        \caption*{\small{(f) Flowers102}}
        \resizebox{1.0\textwidth}{!}{
        \begin{tabular}{lcc|c}
        \toprule
              & base  & new   & \textbf{H} \\
        \midrule
        CLIP  & 72.08  & \textbf{77.80}  & 74.83  \\
        CoOp  & 97.60  & 59.67  & 74.06  \\
        CoCoOp & 94.87  & 71.75  & 81.71  \\
        KgCoOp & 95.00  & 74.73  & \textbf{83.65}  \\
        \rowcolor{gray!40}  
        Ours  & \textbf{97.78}  & 72.46  & {83.24}  \\
        \bottomrule
        \end{tabular}%
  }
    \end{minipage}

    \begin{minipage}{0.325\textwidth}
        \centering
        \caption*{\small{(g) Food101}}
        \resizebox{1.0\textwidth}{!}{
        \begin{tabular}{lcc|c}
        \toprule
              & base  & new   & \textbf{H} \\
        \midrule
        CLIP  & 90.10  & 91.22  & 90.66  \\
        CoOp  & 88.33  & 82.26  & 85.19  \\
        CoCoOp & \textbf{90.70}  & {91.29}  & {90.99}  \\
        KgCoOp & 90.50  & \textbf{91.70}  & \textbf{91.09}  \\
        \rowcolor{gray!40}  
        Ours  & 90.63  & 91.21  & 90.92  \\
        \bottomrule
        \end{tabular}%
  }
    \end{minipage}
    \begin{minipage}{0.325\textwidth}
        \centering
        \caption*{\small{(h) FGVCAircraft}}
        \resizebox{1.0\textwidth}{!}{
        \begin{tabular}{lcc|c}
        \toprule
              & base  & new   & \textbf{H} \\
        \midrule
        CLIP  & 27.19  & \textbf{36.29}  & 31.09  \\
        CoOp  & 40.44  & 22.30  & 28.75  \\
        CoCoOp & 33.41  & 23.71  & 27.74  \\
        KgCoOp & 36.21  & 33.55  & 34.83  \\
        \rowcolor{gray!40}  
        Ours  & \textbf{45.88}  & 34.09  & \textbf{39.12}  \\
        \bottomrule
        \end{tabular}%
  }
    \end{minipage}
    \begin{minipage}{0.325\textwidth}
        \centering
        \caption*{\small{(i) SUN397}}
        \resizebox{1.0\textwidth}{!}{
        \begin{tabular}{lcc|c}
        \toprule
              & base  & new   & \textbf{H} \\
        \midrule
        CLIP  & 69.36  & 75.35  & 72.23  \\
        CoOp  & 80.60  & 65.89  & 72.51  \\
        CoCoOp & 79.74  & \textbf{76.86} & 78.27  \\
        KgCoOp & 80.29  & 76.53  & 78.36  \\
        \rowcolor{gray!40}  
        Ours  & \textbf{81.95}  & 75.62  & \textbf{78.65}  \\
        \bottomrule
        \end{tabular}%
  }
    \end{minipage}

    \begin{minipage}{0.325\textwidth}
        \centering
        \caption*{\small{(j) DTD}}
        \resizebox{1.0\textwidth}{!}{
        \begin{tabular}{lcc|c}
        \toprule
              & base  & new   & \textbf{H} \\
        \midrule
        CLIP  & 53.24  & \textbf{59.90}  & 56.37  \\
        CoOp  & 79.44  & 41.18  & 54.24  \\
        CoCoOp & 77.01  & 56.00 & 64.85  \\
        KgCoOp & 77.55  & 54.99  & 64.35  \\
        \rowcolor{gray!40}  
        Ours  & \textbf{80.63}  & 59.82  & \textbf{68.69}  \\
        \bottomrule
        \end{tabular}%
  }
    \end{minipage}
    \begin{minipage}{0.325\textwidth}
        \centering
        \caption*{\small{(k) EuroSAT}}
        \resizebox{1.0\textwidth}{!}{
        \begin{tabular}{lcc|c}
        \toprule
              & base  & new   & \textbf{H} \\
        \midrule
        CLIP  & 56.48  & 64.05  & 60.03  \\
        CoOp  & 92.19  & 54.74  & 68.69  \\
        CoCoOp & 87.49  & 60.04 & 71.21  \\
        KgCoOp & 85.64  & 64.34  & 73.48  \\
        \rowcolor{gray!40}  
        Ours  & \textbf{93.28}  & \textbf{79.21}  & \textbf{85.67}  \\
        \bottomrule
        \end{tabular}%
  }
    \end{minipage}
    \begin{minipage}{0.325\textwidth}
        \centering
        \caption*{\small{(l) UCF101}}
        \resizebox{1.0\textwidth}{!}{
        \begin{tabular}{lcc|c}
        \toprule
              & base  & new   & \textbf{H} \\
        \midrule
        CLIP  & 70.53  & 77.50  & 73.85  \\
        CoOp  & 84.69  & 56.05  & 67.46  \\
        CoCoOp & 82.33  & 73.45 & 77.64  \\
        KgCoOp & 82.89  & 76.67  & 79.65  \\
        \rowcolor{gray!40}  
        Ours  & \textbf{86.63}  & \textbf{79.09}  & \textbf{82.69}  \\
        \bottomrule
        \end{tabular}%
  }
    \end{minipage}
    
    \caption{
        \textbf{Base-to-new generalization.}
        Comparison of CLIP, CoOp, CoCoOp, KgCoOp, and our method. 
        CoOp, CoCoOp, and KgCoOp are training-required methods, while our method is a training-free method.
        base and new denotes the average accuracy of base and new classes, and H denotes their harmonic mean.
}
    \label{tab:base2new}
\end{table}

\subsection{Robustness to Different Architectures}
\setlength{\tabcolsep}{6pt}
\begin{table}[!htbp]
  \centering
  \caption{
  \textbf{Robustness of different architectures on 11 datasets.} 
  The models are trained under the 16-shot setting with different visual architectures of CLIP.
  \textbf{Bold} denotes the highest results.
}
  \resizebox{1.0\textwidth}{!}{
    \begin{tabular}{llcccccccccccc}
    \toprule
    \multicolumn{2}{c}{\textbf{Method}} & \textbf{Pets} & \textbf{Flowers} & \textbf{FGVC} & \textbf{DTD} & \textbf{EuroSAT} & \textbf{Cars} & \textbf{Food} & \textbf{SUN} & \textbf{Cal} & \textbf{UCF} & \textbf{IN} & \textbf{Avg.} \\
    \midrule
    \multicolumn{2}{l}{\textbf{ResNet-50}} & \multicolumn{12}{c}{} \\
    \multicolumn{2}{l}{Zero-Shot CLIP} & 85.77  & 66.14  & 17.28  & 42.32  & 37.56  & 55.61  & 77.31  & 58.52  & 86.29  & 61.46  & 58.18  & 58.77  \\
    \multicolumn{2}{l}{CoOp} & 87.01  & 94.51  & 31.26  & 63.58  & 83.53  & 73.36  & 74.67  & 69.26  & 91.83  & 75.71  & 62.95  & 73.42  \\
    \multicolumn{2}{l}{Tip-Adapter} & 88.14  & 89.89  & 29.76  & 60.93  & 70.54  & 66.77  & 77.83  & 66.85  & 90.18  & 70.58  & 62.01  & 70.32  \\
    \rowcolor{gray!40}  
    \multicolumn{2}{l}{Ours} & \textbf{88.81}  & \textbf{95.72}  & \textbf{40.61}  & \textbf{66.51}  & \textbf{86.12}  & \textbf{75.12}  & \textbf{79.05}  & \textbf{70.70}  & \textbf{92.55}  & \textbf{77.53}  & \textbf{63.82}  & \textbf{76.05}  \\
    \midrule
    \multicolumn{2}{l}{\textbf{ResNet-101}} & \multicolumn{12}{c}{} \\
    \multicolumn{2}{l}{Zero-Shot CLIP} & 86.75  & 64.03  & 18.42  & 38.59  & 32.59  & 66.23  & 80.53  & 58.96  & 89.78  & 60.96  & 61.62  & 59.86  \\
    \multicolumn{2}{l}{CoOp} & 88.57  & 95.19  & 34.76  & 65.47  & 83.54  & 79.74  & 79.08  & 71.19  & 93.42  & 77.95  & \textbf{66.60}  & 75.96  \\
    \multicolumn{2}{l}{Tip-Adapter} & 87.23  & 90.77  & 31.51  & 62.37  & 66.45  & 72.96  & 81.31  & 67.96  & 93.01  & 73.53  & 64.41  & 71.96  \\
    \rowcolor{gray!40}  
    \multicolumn{2}{l}{Ours} & \textbf{91.43}  & \textbf{96.17}  & \textbf{42.58}  & \textbf{68.62}  & \textbf{86.32}  & \textbf{79.99}  & \textbf{82.15}  & \textbf{72.07}  & \textbf{93.63}  & \textbf{79.31}  & 66.33  & \textbf{78.06}  \\
    \midrule
    \multicolumn{2}{l}{\textbf{ViT-B/32}} & \multicolumn{12}{c}{} \\
    \multicolumn{2}{l}{Zero-Shot CLIP} & 87.49  & 66.95  & 19.23  & 43.97  & 45.19  & 60.55  & 80.50  & 61.91  & 90.87  & 62.01  & 62.05  & 61.88  \\
    \multicolumn{2}{l}{CoOp} & 88.68  & 94.97  & 33.22  & 65.37  & 83.43  & 76.08  & 78.45  & 72.38  & \textbf{94.62}  & 78.66  & 66.85  & 75.70  \\
    \multicolumn{2}{l}{Tip-Adapter} & 88.34  & 91.61  & 30.92  & 61.90  & 69.53  & 69.59  & 80.94  & 70.27  & 93.85  & 73.74  & 65.41  & 72.37  \\
    \rowcolor{gray!40}  
    \multicolumn{2}{l}{Ours} & \textbf{91.21}  & \textbf{96.16}  & \textbf{41.74}  & \textbf{67.63}  & \textbf{87.30}  & \textbf{77.55}  & \textbf{81.84}  & \textbf{73.60}  & 94.42  & \textbf{80.17}  & \textbf{67.00}  & \textbf{78.06}  \\
    \midrule
    \multicolumn{2}{l}{\textbf{ViT-B/16}} & \multicolumn{12}{c}{} \\
    \multicolumn{2}{l}{Zero-Shot CLIP} & 89.21  & 71.34  & 24.72  & 44.39  & 47.60  & 65.32  & 86.06  & 62.50  & 92.94  & 66.75  & 66.73  & 65.23  \\
    \multicolumn{2}{l}{CoOp} & 92.53  & 96.47  & 42.91  & 68.50  & 80.87  & \textbf{83.09}  & 87.21  & 75.29  & 95.77  & 82.24  & 71.92  & 79.71  \\
    \multicolumn{2}{l}{Tip-Adapter} & 91.54  & 94.41  & 39.48  & 65.68  & 76.58  & 75.44  & 86.47  & 71.85  & 95.10  & 77.94 & 70.46  & 76.81  \\
    \rowcolor{gray!40}  
    \multicolumn{2}{l}{Ours} & \textbf{93.73}  & \textbf{97.92}  & \textbf{50.33}  & \textbf{71.26}  & \textbf{89.19}  & 82.63  & \textbf{87.27}  & \textbf{75.87}  & \textbf{95.79}  & \textbf{84.09}  & \textbf{72.24}  & \textbf{81.85}  \\
    \bottomrule
    \vspace{-20pt}
    \end{tabular}%
}
  \label{tab:architecture}%
\end{table}%

We further evaluate the efficacy of our proposed method across 11 datasets with varying visual architectures of CLIP.
We selected two approaches for comparison: a training-required method, CoOp~\citep{zhou2022learning}, and a training-free method, Tip-Adapter~\citep{zhang2022tip}.
And these methods are trained on the 16-shot dataset.
As shown in Table~\ref{tab:architecture}, our method yielded a substantial improvement of 17.28\%, 18.20\%, 16.18\%, and 16.62\% on average, compared to the Zero-Shot CLIP~\citep{radford2021learning} approach, for ResNet-50, ResNet-101, ViT-B/32, and ViT-B/16 CLIP, respectively, across all 11 datasets.
The results demonstrate the effectiveness of our method across different CLIP architectures.

\subsection{Ablation of the Hyper-parameter $\alpha$}
As shown in Equation~(\ref{eq:ensemble}), our method needs a hyper-parameter $\alpha$ to integrate the knowledge from visual and text modalities. 
Specifically, we only performed a coarse search for $\alpha$ within [0.001, 0.01, 0.1, 1, 10, 100]. 
The search only ascertained the order of magnitude for alpha, providing a foundational understanding of its impact. 
The optimal alpha values resulting from this exploration are detailed in Table~\ref{tab:best_alpha}.
Notably, our analysis reveals a consistent concentration of optimal alpha values in the range of 1 to 10 across the majority of datasets. Subsequently, we conducted additional sensitivity experiments within this narrowed range.
The results of different $\alpha$ values are reported in Table~\ref{tab:ablation_alpha}.
The findings indicate a robust model sensitivity to alpha values, with the exception of $\alpha = 1$. 
And on average over 11 datasets, the model achieves performance around 75.5.

\setlength{\tabcolsep}{6pt}
\begin{table}[htbp]
  \centering
  \caption{
  The best $\alpha$ in our method for 16-shot datasets with RN50 CLIP. The best alpha is searched on [0.001, 0.01, 0.1, 1, 10, 100] using the validation set.
  }
  \resizebox{1.0\textwidth}{!}{
    \begin{tabular}{ccccccccccccc}
    \toprule
    \multicolumn{2}{c|}{} & \textbf{Pets} & \textbf{Flo} & \textbf{FGVC} & \textbf{DTD} & \textbf{EuroSAT} & \textbf{Cars} & \textbf{Food} & \textbf{SUN} & \textbf{Cal} & \textbf{UCF} & \textbf{IN} \\
    \midrule
    \multicolumn{2}{c|}{best $\alpha$} & 1.00  & 10.00  & 100.00  & 10.00  & 10.00  & 10.00  & 1.00  & 10.00  & 1.00  & 10.00  & 10.00  \\
    \multicolumn{2}{c|}{Ours} & 88.81  & 95.72  & 40.61  & 66.51  & 86.12  & 75.12  & 79.05  & 70.70  & 92.55  & 77.53  & 63.82   \\
    \bottomrule
    \end{tabular}%
    }
  \label{tab:best_alpha}%
\end{table}%

\setlength{\tabcolsep}{6pt}
\begin{table}[htbp]
  \centering
  \caption{
  \textbf{Ablation on hyper-parameter $\alpha$ of our method.} The models are trained under the 16-shot setting with RN50 CLIP.
  }
  \resizebox{1.0\textwidth}{!}{
    \begin{tabular}{cc|cccccccccccc}
    \toprule
    \multicolumn{2}{c|}{} & \textbf{Pets} & \textbf{Flo} & \textbf{FGVC} & \textbf{DTD} & \textbf{EuroSAT} & \textbf{Cars} & \textbf{Food} & \textbf{SUN} & \textbf{Cal} & \textbf{UCF} & \textbf{IN} & \textbf{Avg} \\
    \midrule
    \multicolumn{2}{l|}{$\alpha=1$} & 88.81  & 89.32  & 31.10  & 67.65  & 85.87  & 67.54  & 79.05  & 66.73  & 92.55  & 77.53  & 61.79  & 73.45  \\
    \multicolumn{2}{l|}{$\alpha=3$} & 87.35  & 95.18  & 38.20  & 67.42  & 85.86  & 74.07  & 77.80  & 70.15  & 92.63  & 78.53  & 63.19  & 75.49  \\
    \multicolumn{2}{l|}{$\alpha=5$} & 86.45  & 95.67  & 39.72  & 67.18  & 85.85  & 75.45  & 76.58  & 70.81  & 90.13  & 78.24  & 63.73  & 75.44  \\
    \multicolumn{2}{l|}{$\alpha=7$} & 85.71  & 95.90  & 40.36  & 67.00  & 85.82  & 75.82  & 75.76  & 70.97  & 91.97  & 77.84  & 63.91  & 75.55  \\
    \multicolumn{2}{l|}{$\alpha=9$} & 85.36  & 95.99  & 40.63  & 66.86  & 85.80  & 75.62  & 75.17  & 70.79  & 91.94  & 77.59  & 63.88  & 75.42  \\
    \multicolumn{2}{l|}{$\alpha=10$} & 85.26  & 95.72  & 40.69  & 66.51  & 86.12  & 75.12  & 74.94  & 70.70  & 91.90  & 77.53  & 63.82  & 75.30  \\
    \bottomrule
    \end{tabular}%
    }
  \label{tab:ablation_alpha}%
\end{table}%

\section{Experimental Details}
\label{appendix:b}
\subsection{Statistic of Datasets}
Following previous work~\citep{zhou2022conditional, zhou2022learning, wang2023improving, huang2022unsupervised, wang2023exploring}, we conduct experiments on 17 publicly available image classification datasets.
The datasets include ImageNet~\citep{deng2009imagenet}, Caltech101~\citep{fei2004learning}, OxfordPets~\citep{parkhi2012cats}, StanfordCars~\citep{krause20133d}, Flowers102~\citep{nilsback2008automated}, Food101~\citep{bossard2014food}, FGVCAircraft~\citep{maji2013fine}, EuroSAT~\citep{helber2019eurosat}, UCF101~\citep{soomro2012ucf101}, DTD~\citep{cimpoi2014describing}, SUN397~\citep{xiao2010sun}, ImageNetV2~\citep{recht2019imagenet},
ImageNet-Sketch~\citep{wang2019learning}, ImageNet-A~\citep{hendrycks2021natural}, ImageNet-R~\citep{hendrycks2021many}, ImageNet-LT~\citep{liu2019large}, and Places-LT~\citep{zhou2017places}.

\begin{table}[htbp]
  \centering
  \caption{Detailed statistics of datasets used in experiments.}
    \begin{tabular}{lrrrcc}
    \toprule
    Dataset & \# Classes & \# Training & \# Test & \multicolumn{2}{c}{Task} \\
    \midrule
    {OxfordPets} & 37    & 2,944 & 3,669 & \multicolumn{2}{c}{fine-grained pets recognition} \\
    {Flowers102} & 102   & 4,093 & 2,463 & \multicolumn{2}{c}{fine-grained flowers recognition} \\
    {FGVCAircraft} & 100   & 3,334 & 3,333 & \multicolumn{2}{c}{fine-grained aircraft recognition} \\
    {DTD} & 47    & 2,820 & 1,692 & \multicolumn{2}{c}{Textural recognition} \\
    {EuroSAT} & 10    & 13,500 & 8,100 & \multicolumn{2}{c}{Satellite image recognition} \\
    {StanfordCars} & 196   & 6,509 & 8,041 & \multicolumn{2}{c}{Fine-grained car recognition} \\
    {Food101} & 101   & 50,500 & 30,300 & \multicolumn{2}{c}{Fine-grained food recognition} \\
    {Sun397} & 397   & 15,880 & 19,850 & \multicolumn{2}{c}{Scene recognition} \\
    {Caltech101} & 100   & 4,128 & 2,465 & \multicolumn{2}{c}{Object recognition} \\
    {UCF101} & 101   & 7,639 & 3,783 & \multicolumn{2}{c}{Action recognition} \\
    {ImageNet} & 1,000 & 1.28M & 50,000 & \multicolumn{2}{c}{Object recognition} \\
    \midrule
    {ImageNetV2} & 1,000 & -     & 10,000 & \multicolumn{2}{c}{Robustness of collocation} \\
    {ImageNet-Sketch} & 1,000 & -     & 50,889 & \multicolumn{2}{c}{Robustness of sketch domain} \\
    {ImageNet-A} & 200   & -     & 7,500 & \multicolumn{2}{c}{Robustness of adversarial} \\
    {ImageNet-R} & 200   & -     & 30,000 & \multicolumn{2}{c}{Robustness of rendition styles} \\
    \midrule
    {ImageNet-LT} & 1,000 & 115,846 & 50,000 & \multicolumn{2}{c}{long-tail object recognition} \\
    {Places-LT} & 365   & 62,500 & 7300  & \multicolumn{2}{c}{long-tail place recognition} \\
    \bottomrule
    \end{tabular}%
  \label{tab:addlabel}%
\end{table}%

\subsection{Prompt Templates for Each Dataset}
For the zero-shot classifier, we employ handcrafted prompts to generate the classifier weight, as proposed in CLIP~\citep{radford2021learning}.
By default, we utilize the prompt template ``a photo of \{class\}." for class labels, where \{class\} represents the name of the classes. 
However, for fine-grained classification datasets such as FGVCAircraft~\citep{maji2013fine}, we incorporate the name of the superclass or a description into the template.
The prompt templates for each dataset are shown as follows.

\begin{table}[htbp]
  \centering
  \caption{Prompt templates for each class.}
    \begin{tabular}{cc|ccccc}
    \toprule
    \multicolumn{2}{c|}{\textbf{Dataset}} & \multicolumn{5}{c}{\textbf{Prompt template}} \\
    \midrule
    \multicolumn{2}{l|}{\multirow{3}[2]{*}{Caltech101~\citep{fei2004learning}}} & \multicolumn{5}{c}{\multirow{3}[2]{*}{``a photo of a \{class\}."}} \\
    \multicolumn{2}{c|}{} & \multicolumn{5}{c}{} \\
    \multicolumn{2}{c|}{} & \multicolumn{5}{c}{} \\
    \midrule
    \multicolumn{2}{l|}{\multirow{3}[2]{*}{OxfordPets~\citep{parkhi2012cats}}} & \multicolumn{5}{c}{\multirow{3}[2]{*}{``a photo of a \{class\}, a type of pet."}} \\
    \multicolumn{2}{c|}{} & \multicolumn{5}{c}{} \\
    \multicolumn{2}{c|}{} & \multicolumn{5}{c}{} \\
    \midrule
    \multicolumn{2}{l|}{\multirow{3}[2]{*}{StanfordCars~\citep{krause20133d}}} & \multicolumn{5}{c}{\multirow{3}[2]{*}{``a photo of a \{class\}."}} \\
    \multicolumn{2}{c|}{} & \multicolumn{5}{c}{} \\
    \multicolumn{2}{c|}{} & \multicolumn{5}{c}{} \\
    \midrule
    \multicolumn{2}{l|}{\multirow{3}[2]{*}{Flowers102~\citep{nilsback2008automated}}} & \multicolumn{5}{c}{\multirow{3}[2]{*}{``a photo of a \{class\}, a type of flower."}} \\
    \multicolumn{2}{c|}{} & \multicolumn{5}{c}{} \\
    \multicolumn{2}{c|}{} & \multicolumn{5}{c}{} \\
    \midrule
    \multicolumn{2}{l|}{\multirow{3}[2]{*}{Food101~\citep{bossard2014food}}} & \multicolumn{5}{c}{\multirow{3}[2]{*}{``a photo of \{class\}, a type of food."}} \\
    \multicolumn{2}{c|}{} & \multicolumn{5}{c}{} \\
    \multicolumn{2}{c|}{} & \multicolumn{5}{c}{} \\
    \midrule
    \multicolumn{2}{l|}{\multirow{3}[2]{*}{FGVCAircraft~\citep{maji2013fine}}} & \multicolumn{5}{c}{\multirow{3}[2]{*}{``a photo of a \{class\}, a type of aircraft."}} \\
    \multicolumn{2}{c|}{} & \multicolumn{5}{c}{} \\
    \multicolumn{2}{c|}{} & \multicolumn{5}{c}{} \\
    \midrule
    \multicolumn{2}{l|}{\multirow{3}[2]{*}{SUN397~\citep{xiao2010sun}}} & \multicolumn{5}{c}{\multirow{3}[2]{*}{``a photo of a \{class\}."}} \\
    \multicolumn{2}{c|}{} & \multicolumn{5}{c}{} \\
    \multicolumn{2}{c|}{} & \multicolumn{5}{c}{} \\
    \midrule
    \multicolumn{2}{l|}{\multirow{3}[2]{*}{DTD~\citep{cimpoi2014describing}}} & \multicolumn{5}{c}{\multirow{3}[2]{*}{``\{class\} texture."}} \\
    \multicolumn{2}{c|}{} & \multicolumn{5}{c}{} \\
    \multicolumn{2}{c|}{} & \multicolumn{5}{c}{} \\
    \midrule
    \multicolumn{2}{l|}{\multirow{3}[2]{*}{EuroSAT~\citep{helber2019eurosat}}} & \multicolumn{5}{c}{\multirow{3}[2]{*}{``a centered satellite photo of \{class\}."}} \\
    \multicolumn{2}{c|}{} & \multicolumn{5}{c}{} \\
    \multicolumn{2}{c|}{} & \multicolumn{5}{c}{} \\
    \midrule
    \multicolumn{2}{l|}{\multirow{3}[2]{*}{UCF101~\citep{soomro2012ucf101}}} & \multicolumn{5}{c}{\multirow{3}[2]{*}{``a photo of a person doing \{class\}."}} \\
    \multicolumn{2}{c|}{} & \multicolumn{5}{c}{} \\
    \multicolumn{2}{c|}{} & \multicolumn{5}{c}{} \\
    \midrule
    \multicolumn{2}{l|}{\multirow{6}[2]{*}{ImageNet~\citep{deng2009imagenet}}} & \multicolumn{5}{c}{``a bad photo of the \{class\}."} \\
    \multicolumn{2}{c|}{} & \multicolumn{5}{c}{``a origami \{class\}."} \\
    \multicolumn{2}{c|}{} & \multicolumn{5}{c}{``a photo of the large \{class\}."} \\
    \multicolumn{2}{c|}{} & \multicolumn{5}{c}{``a \{class\} in a video game."} \\
    \multicolumn{2}{c|}{} & \multicolumn{5}{c}{``art of the \{class\}."} \\
    \multicolumn{2}{c|}{} & \multicolumn{5}{c}{``a photo of the small \{class\}."} \\
    \midrule
    \multicolumn{2}{l|}{\multirow{3}[2]{*}{ImageNet-LT~\citep{liu2019large}}} & \multicolumn{5}{c}{\multirow{3}[2]{*}{``a photo of a \{class\}."}} \\
    \multicolumn{2}{c|}{} & \multicolumn{5}{c}{} \\
    \multicolumn{2}{c|}{} & \multicolumn{5}{c}{} \\
    \midrule
    \multicolumn{2}{l|}{\multirow{3}[2]{*}{Places-LT~\citep{zhou2017places}}} & \multicolumn{5}{c}{\multirow{3}[2]{*}{``a photo of a \{class\}."}} \\
    \multicolumn{2}{c|}{} & \multicolumn{5}{c}{} \\
    \multicolumn{2}{c|}{} & \multicolumn{5}{c}{} \\
    \bottomrule
    \end{tabular}%
  \label{tab:template}%
\end{table}%
\end{document}